\documentclass[11pt]{article}

\usepackage[margin=1in]{geometry}
\usepackage{enumitem}
\usepackage[utf8]{inputenc}   
\usepackage[T1]{fontenc}      
\usepackage{url}              
\usepackage{booktabs}         
\usepackage{amsfonts}         
\usepackage{nicefrac}         
\usepackage[expansion=false]{microtype}  
\usepackage{xcolor}           

\usepackage{amsmath,amssymb,amsthm,mathtools,multirow}
\usepackage{graphicx}
\usepackage{subcaption}

\usepackage{hyperref}
\hypersetup{colorlinks=true, allcolors=blue}

\DeclarePairedDelimiterX{\infdivx}[2]{(}{)}{%
  #1\;\delimsize\|\;#2}
\DeclareMathOperator{\var}{\mathrm{Var}}

\newcommand{\abs}[1]{\left\lvert#1\right\rvert}

\renewcommand{\epsilon}{\varepsilon}

\newcommand{\X}{\mathcal{X}}

\newcommand{\cost}{\texttt{cost}}

\newcommand{\dist}{\partial}

\renewcommand{\P}{\mathcal{P}}
\renewcommand{\S}{\mathcal{S}}
\newcommand{\K}{\textbf{K}}
\newcommand{\parameterspc}{\mathcal{C}}
\renewcommand{\theta}{C}
\renewcommand{\eta}{y}
\newcommand{\totaldim}{\ell}

\theoremstyle{plain}

\newtheorem{theorem}{Theorem}[section]

\newtheorem{corollary}[theorem]{Corollary}
\newtheorem{proposition}[theorem]{Proposition}

\theoremstyle{definition}
\newtheorem{definition}[theorem]{Definition}

\newtheorem{remark}{Remark}

\newcommand{\DPP}{~determinantal sampler~}

\title{Beyond Worst-Case Coreset Bounds for $k$-Clustering via Determinantal Sampling}

\author{%
  Diptarka Chakraborty$^1$ \qquad Satyaki Mukherjee$^2$ \\
  Gaurav Vallabhdas Revankar$^3$ \qquad Hoang-Son Tran$^2$\\[6pt]
  $^1$School of Computing, National University of Singapore\\
  $^2$Department of Mathematics, National University of Singapore\\
  $^3$Department of Mathematics, Indian Institute of Technology Bombay
}
\date{}

\begin{document}

\maketitle

\begin{abstract}
Massive datasets in modern machine learning have made data reduction a central challenge, particularly for clustering tasks where memory and computational constraints demand compact yet faithful summaries. A standard approach is to construct an \emph{$\epsilon$-coreset}: a small weighted subset that approximately preserves the clustering cost for every plausible choice of centers. For the \emph{$(k,z)$-clustering problem}, existing worst-case bounds on coreset size are essentially tight, ruling out substantially smaller coresets in general. However, such worst-case instances are often unrepresentative of real-world data. 

In this work, we show that significantly smaller coresets are possible under mild and natural assumptions on the underlying data distribution. We introduce a new correlated sampling framework, called \emph{determinantal sampling}, based on a novel application of determinantal point processes. 
Using this framework, we obtain an efficiently constructible
\(\varepsilon\)-coreset for \((k,z)\)-clustering in \(\mathbb R^d\) whose
dependence on \(1/\varepsilon\) has exponent strictly smaller than \(2\) when
\(d\) is fixed. This improves over the worst-case \(\varepsilon^{-2}\) barrier
under our beyond-worst-case assumptions. 
To the best of our knowledge, this is the first result that provably surpasses these lower bounds through beyond-worst-case assumptions. Finally, we validate our approach on synthetic and real-world benchmark datasets, where it consistently achieves smaller coresets than existing state-of-the-art methods, even without explicitly enforcing the assumptions used in the analysis.
\end{abstract}

\section{Introduction}
\label{sec:intro}
The proliferation of massive datasets in modern machine learning applications necessitates the development of robust data reduction and summarization techniques. These methods are essential for minimizing memory usage and enabling distributed computations across systems with limited local storage. A prevalent strategy for managing such massive data involves sparsifying the input while meticulously preserving its underlying structural properties. This ensures that subsequent machine learning tasks, whether supervised or unsupervised, can be executed (nearly) accurately using only the summarized subset. Among the most prominent data summarization paradigms is the \emph{coreset}. Since its introduction, coresets have garnered substantial attention in recent years for their ability to facilitate highly efficient algorithms in terms of memory, time, and parallel processing, with applications in streaming~\cite{har2004coresets, frahling2005coresets, braverman2016new, braverman2019streaming}, distributed~\cite{balcan2013distributed, huang2022coresets}, and dynamic settings~\cite{henzinger2020fully, wang2021robust}. Their utility spans diverse domains, including computational geometry~\cite{agarwal2004approximating}, graph sparsification~\cite{spielman2008graph, batson2009twice}, linear and logistic regression~\cite{dasgupta2009sampling, munteanu2018coresets}, clustering~\cite{har2004coresets, feldman2011unified, cohen2022improved, huang2023coresets}, classification~\cite{mai2021coresets}, subspace approximation~\cite{woodruff2023sharper}, and learning mixture models~\cite{lucic2018training}.

Clustering remains a fundamental data processing task with pervasive applications across fields such as image segmentation, document categorization, anomaly detection, and healthcare analytics, e.g., see~\cite{lloyd1982least, tan2006cluster, ArthurV07, coates2012learning}. One of the well-studied variants is the \emph{$(k,z)$-clustering} that, given a set $P$ of points from a metric space $\X$ and a non-negative integer $k$, asks to find a set $C$ of at most $k$ centers from the underlying metric space $\X$ that minimizes the cost, defined as $\cost_z(P, C):= \sum_{x \in P} \dist^z(x, C)$, where $\dist(x, C)$ denotes the minimum distance from $x$ to a center point in $C$. Notable special cases of this variant include the classical \emph{$k$-median} clustering when $z=1$, and the ubiquitous \emph{$k$-means} clustering in Euclidean space when $z=2$. 

Given the massive scale of modern datasets, it is highly advantageous to extract a (weighted) representative subset that faithfully approximates the clustering cost for any arbitrary set of centers. Specifically, this subset must preserve the cost within a multiplicative approximation factor of $(1+\epsilon)$ for any $\epsilon > 0$. Such a representative subset is formally known as an \emph{$\epsilon$-coreset} (see Section~\ref{sec:prelims} for a formal definition). Extensive research in this area has established a tight theoretical understanding of coreset sizes for the $(k,z)$-clustering problem~\cite{chen2009coresets, langberg2010universal, becchetti2019oblivious, feldman2020turning, huang2020coresets}. The best-known upper bound on the size of such a coreset is $\tilde{O}_z(k \epsilon^{-z-2})$~\cite{cohen2021new, cohen2022improved}, where $\tilde{O}_z(\cdot)$ hides logarithmic factors and dependencies only on $z$. Later, a similar bound is also achieved using \emph{sensitivity sampling} for Euclidean $k$-means (i.e., for $z = 2$)~\cite{bansal2024sensitivity}. The above bound is known to be essentially tight for Euclidean spaces. More specifically, a lower bound of $\Omega(k \epsilon^{-2})$ for the $k$-means~\cite{cohen2022towards} and for arbitrary $z$, a lower bound of $\Omega(k \epsilon^{-z-2})$ for $\epsilon \ge \Omega\left(k^{-1/(z+2)}\right)$~\cite{huang2024optimal}, are already established. 

These bounds definitely preclude the existence of smaller coresets in the worst-case scenario. However, empirical data encountered in real-world applications rarely exhibit worst-case behavior. That motivates the researchers to explore various beyond-worst-case assumptions to exploit when designing efficient clustering algorithms~\cite{kumar2010clustering, ostrovsky2013effectiveness, balcan2013clustering, angelidakis2017algorithms, cohen2017local}. However, to the best of our knowledge, only recently,~\cite{bansal2024sensitivity} looked into coreset construction from the beyond-worst-case perspective -- more specifically, with \emph{cost-stable clusterability criteria} from~\cite{ostrovsky2013effectiveness}. Unfortunately, even with that assumption, it is not possible to surpass the $\Omega(k \epsilon^{-2})$ lower bound~\cite{bansal2024sensitivity}. 

It naturally motivates a compelling research question: Is it possible to circumvent these worst-case lower bounds by introducing realistic or natural assumptions about the distribution or structure of the input points? In this work, we provide an affirmative answer to this question, demonstrating that beyond-worst-case analysis can yield significantly more compact coresets. 

\noindent \textbf{Our Contribution. }
The primary contribution of this paper is an efficient coreset construction for the $(k,z)$-clustering problem that, under mild assumptions on the underlying data distribution, improves upon the known worst-case lower bound. Our approach is based on a correlated sampling method, which we call \emph{determinantal sampling}, arising from a novel application of determinantal point processes. More precisely, we establish the following informal theorem.

\begin{theorem}[Informal]
\label{thm:main-informal}
Under certain mild assumptions on the underlying data distributions, as specified in Section~\ref{sec:coreset-bound}, the $(k,z)$-clustering problem on $\mathbb{R}^d$ admits an efficiently constructible $\epsilon$-coreset of size $\tilde{O}_{d,z}(k^{d/(d+1)} \epsilon^{-2d/(d+1)})$.
\end{theorem}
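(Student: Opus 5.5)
The plan is to replace i.i.d. sensitivity sampling with a determinantal point process (DPP) whose kernel is built from a smooth orthonormal-type family on $\mathbb{R}^d$. The aim is to exploit the variance reduction of DPP linear statistics for smooth test functions. For a projection DPP of size $m$ in dimension $d$ with a suitable reproducing kernel, the variance of $\sum_{x\in S} f(x)/K(x,x)$ scales like $m^{-(1+1/d)}$ instead of $m^{-1}$, provided $f$ is Lipschitz or has bounded gradient relative to the base density. Setting this variance equal to $\epsilon^2$ gives $m \approx \epsilon^{-2d/(d+1)}$ per cluster. Summing over $k$ clusters with the variance budget shared across them should produce the $k^{d/(d+1)}$ factor.

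\textbf{Step 1 (reduction to rings and a bounded region).} I would first compute a constant-factor solution $A$ and partition $P$ into clusters, then into rings by distance to the centers, in the standard way of~\cite{cohen2021new, cohen2022improved}. Centers $C$ that are far from a group are handled with the usual argument: the cost there is approximately $|{\rm group}|\cdot \dist^z(a,C)$, so it suffices to preserve counts. This leaves only a bounded range of relevant centers per group. The beyond-worst-case assumption enters here. I would assume that, within each cluster, the points are drawn from a density that is bounded above and below on a bounded support, or more generally is a smooth density. With this, a normalized empirical sum over the cluster approximates an integral against a density with controlled regularity.

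\textbf{Step 2 (DPP sampling and variance bound).} Within each cluster I would sample a projection DPP whose kernel is formed from the first $m$ multivariate orthogonal polynomials, or Fourier modes, with respect to the (estimated) base measure. I would then weight each point by $1/K_m(x,x)$. The estimator is unbiased by the first-moment formula of DPPs. For the variance, I would use the DPP identity
\[
\var\Big(\sum_{x\in S} g(x)\Big)=\tfrac12\iint (g(x)-g(y))^2 K_m(x,y)^2\,d\mu(x)\,d\mu(y).
\]
The function $g(x)=\dist^z(x,C)/K_m(x,x)$ is Lipschitz on the bounded region, with constant $O_z(\text{radius}^{z-1})$. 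The kernel $K_m^2$ concentrates on the scale $m^{-1/d}$, which yields a variance of order $m^{-1-1/d}$ times the squared cluster cost. This is the Bardenet--Hardy style bound.

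\textbf{Step 3 (uniform over $C$ and assembly).} To make the bound hold for all $C$ simultaneously, I would prove concentration rather than just a variance bound. DPP linear statistics are sub-Gaussian with the variance proxy above, for instance via negative association or the Pemantle--Peres concentration result for strongly Rayleigh measures. I would combine this with a union bound over an $\epsilon$-net of center sets in $\mathbb{R}^d$ of size $\exp(\tilde O(kd))$, which contributes only polylogarithmic factors absorbed into $\tilde O_{d,z}$. The last task is to optimize $m_i$ across clusters under the constraint $\sum_i (\text{cost}_i)^2 m_i^{-1-1/d}\lesssim \epsilon^2 \text{cost}^2$. Balancing gives $\sum m_i=\tilde O(k^{d/(d+1)}\epsilon^{-2d/(d+1)})$.

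\textbf{Main obstacle.} I expect Step 3 to be the hardest part. The Lipschitz constant of $\dist^z(\cdot,C)$ is only bounded on regions where $C$ is at a controlled distance, and the sub-Gaussian concentration must hold at the improved variance scale, not the crude $m^{-1}$ scale that negative association gives directly. My first attempt would be to bound the Laplace transform through the strongly Rayleigh property with the function's own variance as the proxy. Failing that, I would use a chaining argument over the net, relying only on second moments together with the bounded per-point contribution.
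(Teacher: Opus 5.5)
Your Step 3 gets the $k$-dependence wrong, and with it the per-cluster design does not yield $k^{d/(d+1)}$. First, the union bound over a net of center sets is not a polylogarithmic cost. With tails $2\exp(-c\,\epsilon^2/V)$ for the normalized estimator, surviving a union bound over $N$ net points requires $\epsilon^2/V\gtrsim \log N+\log(1/\delta)\approx kd\log(1/\epsilon)+\log(1/\delta)$. That factor is linear in $k$, so it cannot be absorbed into $\tilde O_{d,z}$. This is exactly where $k$ enters in the paper. One DPP on all of $\mathcal P$ with $\sup_C\var[n^{-1}L_{\mathcal S}(f_C)]=O(m^{-1-1/d})$ gives $m^{(d+1)/d}\gtrsim (kd\log(1/\epsilon)+\log(2/\delta))/\epsilon^2$, i.e.\ $m\gtrsim (kd\log(1/\epsilon)/\epsilon^2)^{d/(d+1)}$.

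Second, your balancing is miscomputed. Minimizing $\sum_i m_i$ subject to $\sum_i \mathrm{cost}_i^2\,m_i^{-1-1/d}\le\epsilon^2\,\mathrm{cost}^2$ gives, in the worst case of equal costs, $\sum_i m_i\approx k^{1/(d+1)}\epsilon^{-2d/(d+1)}$, not $k^{d/(d+1)}$. Redoing it with the correct budget $\epsilon^2/(kd\log(1/\epsilon))$ gives $\sum_i m_i\approx k\,(d\log(1/\epsilon))^{d/(d+1)}\epsilon^{-2d/(d+1)}$, which is linear in $k$. The reason is that $k$ independent DPPs of size $m/k$ each gain only the factor $(m/k)^{-1/d}$ rather than $m^{-1/d}$.

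So drop the ring decomposition and the per-cluster DPPs. The paper samples a single DPP over the whole dataset (the discretized multivariate OPE of \cite{bardenet2024small}). In place of rings, it uses two ingredients to convert additive error into relative error. One is the data assumption that $L(f_C)/n\ge c$ uniformly, which it shows holds with high probability for i.i.d.\ subgaussian data. The other is that centers range over a bounded set.

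The step you flag as the main obstacle is closed in the paper by a known inequality. Strong Rayleigh concentration only sees the crude $m^{-1}$ scale, as you note, and second-moment chaining is not needed. The tool is Theorem~1 of \cite{bardenet2024small}, going back to \cite{breuer2014nevai}: for a Hermitian DPP, a bounded $\phi$, and $\Lambda=\sum_{x\in\mathcal S}\phi(x)$, one has $\mathbb P(|\Lambda-\mathbb E\Lambda|\ge t)\le 2\exp(-t^2/(4A\var\Lambda))$ for $0\le t\le 2A\var\Lambda/(3\|\phi\|_\infty)$. The paper applies it with $\phi(x)=f_C(x)/\mathbf K(x,x)$. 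Its restricted range is why the guarantee holds only for $\epsilon=O(m^{-1/d})$, i.e.\ $m\le c'\epsilon^{-d}$.

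You also need an argument to pass from the net to all center sets, which your proposal does not give. With weights $1/\mathbf K(x,x)$ this is not automatic. The paper works with $z$-th roots and shows that $C\mapsto (L_{\mathcal S}(f_C)/n)^{1/z}$ is almost surely $(B/\rho)^{1/z}$-Lipschitz in the Hausdorff metric. That step needs the extra hypotheses $\mathbf K(x,x)\ge\rho m/n$ and $|\mathcal S|\le Bm$.
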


Here, $\tilde{O}_{d,z}(\cdot)$ hides polylogarithmic factors as well as factors depending only on $d$ and $z$. In particular, for constant dimension $d$, this bound improves upon the worst-case lower bound of $\Omega(\epsilon^{-2})$ established in~\cite{cohen2022towards}. This is especially relevant in practice, since many real-world datasets are effectively low-dimensional. Even when represented in a high-dimensional ambient space, by applying standard dimensionality-reduction techniques such as the Johnson--Lindenstrauss lemma, PCA, and related tools, one can often work in a reduced-dimensional space without significantly distorting the clustering structure. Consequently, our coreset construction is well-suited for practical scenarios in which the effective dimension is small, allowing one to beat the known worst-case bound. We also note that to derive Theorem~\ref{thm:main-informal}, we state and prove a more general result in Theorem~\ref{thm:kz-clustering-dpp} for any Polish space. As such, similar results hold in more generality as long as one can find DPPs that satisfy the required variance bounds.

Our work is motivated by the observation that real-world datasets arising in machine learning are unlikely to exhibit worst-case behavior. This has already led to recent attempts to obtain smaller coresets under structural or distributional assumptions, initiated in~\cite{bansal2024sensitivity}. However, the cost-stable clusterability conditions introduced there are not sufficient to overcome the worst-case lower bound. Ours is the first to provably surpass that lower bound by leveraging natural assumptions, like subgaussianity, on the data distribution. Although our analysis relies on certain assumptions, they are mild, well-motivated, and commonly satisfied in practice (see Section~\ref{sec:coreset-bound}).

On the technical side, we introduce a new correlated sampling framework, called \emph{determinantal sampling}, based on a novel use of determinantal point processes. The key idea is to exploit correlations in the sampling process to obtain stronger coreset guarantees than those achievable with standard independent sampling methods.
An additional practical benefit is that once such a kernel has been constructed, it can be reused to efficiently generate multiple coresets. This is particularly appealing in applications where repeated sampling is desirable, such as improving robustness, obtaining stronger performance guarantees for downstream clustering tasks, attaining stability over noise in ensemble-style clustering, and applications requiring reliability, uncertainty estimation, or hyperparameter selection (e.g.,
~\cite{yu2019bootstrapping, dudoit2003bagging, zhang2025generalization}).

To make the framework concrete and evaluate its practical performance, we implement several DPP kernel constructions tailored to the clustering setting, and test them on both synthetic and real-world benchmark datasets. 
The synthetic experiments support the improved \(\varepsilon\)-dependence predicted by our analysis, while the real-world benchmark experiments test the method without explicitly enforcing the assumptions required by the theory. Even in this setting, our coreset construction outperforms state-of-the-art methods in terms of coreset size for comparable clustering error. The results also highlight the modular role of the kernel: our theorem applies whenever the induced DPP sampler satisfies the required variance conditions, while the experiments include additional kernels for which we do not yet prove these guarantees. Their strong empirical performance suggests that determinantal sampling may be effective beyond the regime currently covered by our analysis.

\section{Preliminaries}
\label{sec:prelims}

\noindent \textbf{Notations. }
Throughout this paper, we consider \emph{Polish spaces} which are separable, complete metric spaces, as defined in \cite{Cohn1980}. From now on, the metric spaces we consider in this paper are assumed to be separable and complete. It is worth remarking that most general metric spaces satisfy the separability and completeness property; for more details, see a discussion in Appendix~\ref{app:polish}.

For a metric space $(\mathcal{X}, \dist)$, a subset $N \subseteq \mathcal{X}$ is a
\emph{$u$-net} if every point in $\mathcal{X}$ is within distance $u$ of some
point in $N$. The \emph{covering number} $\mathcal{N}(\mathcal{X}, u, \dist)$ denotes
the minimum size of a $u$-net for $(\mathcal{X}, \dist)$.

\noindent \textbf{$(k,z)$-Clustering and Coresets.}
The \emph{\((k,z)\)-clustering} problem is defined as follows: Let \((\mathcal X,\dist)\)
be a metric space, where \(\mathcal X\) could be either continuous or discrete.
Given a dataset \(\mathcal{P}\subseteq\mathcal X\) of \(n\) points and a parameter
\(z\ge 1\), the goal is to choose a set \(\theta\subseteq\mathcal X\) of (at most)
\(k\) centers minimizing
\begin{equation}
\operatorname{cost}_z(\mathcal{P},\theta)
:=
\sum_{x\in \mathcal{P}} \dist(x,\theta)^z,
\label{eq:kz-clustering}
\end{equation}
where $\dist(x,\theta):=\min_{\eta\in\theta} \dist(x,\eta)$ denotes the distance from \(x\) to its nearest center in \(\theta\).

A coreset is a data-reduction technique that approximates an optimization objective using a small
(weighted) subset of the data.
In the context of $(k,z)$-Clustering, let $\mathcal{X}^k$ denote the set of all the $k$-sized subsets (potential center set) of $\mathcal{X}$.
A coreset is defined as follows.

\begin{definition}[Coreset {\cite{langberg2010universal,feldman2011unified}}]
Consider a metric space $(\mathcal{X},\dist)$. For a dataset $\mathcal{P}\subseteq\mathcal{X}$ of $n$ points,
an \emph{$\varepsilon$-coreset} for the $(k,z)$-Clustering problem is a weighted subset $\S\subseteq \mathcal{P}$ with
weight $w:\S\to\mathbb{R}_{\ge 0}$ such that
\[
\forall \theta\in \mathcal{X}^k,\qquad
\sum_{x\in \S} w(x)\cdot \dist(x,\theta)^z
\in (1\pm\varepsilon)\cdot \operatorname{cost}_z(\mathcal{P},\theta).
\]
\end{definition}

The goal is to give good upper bounds for the size of the smallest possible $\varepsilon$-coreset. 

\noindent \textbf{Samplers via determinantal point processes. }
Let $\mathcal{P}$ be a Polish space equipped a reference measure $\nu$. We recall that a point process $\mathcal{S}$ on $\mathcal{P}$ is a random locally finite subset of $\mathcal{P}$. The standard definition of determinantal point processes (\cite{Mac75, HKPV06}) is as follows:

\begin{definition}
A point process $\mathcal{S}$ on $\mathcal{P}$ is called a \emph{determinantal point process} (DPP) with respect to $\nu$ if there exists a measurable kernel $K:\mathcal{P}\times\mathcal{P}\to\mathbb{C}$ such that, for all
$k\in\mathbb{N}$ and all bounded measurable functions $f:\mathcal{P}^k\to\mathbb{R}$, we have
\begin{equation}\label{eq:DPPdefn}
\mathbb{E}\!\Big [\sum_{\underset{x_{1} \ne \ldots \ne x_{k}}{x_{1},\ldots,x_{k}\in \mathcal{S}}}
f(x_{1},\ldots,x_{k})\Big]
=
\int_{\mathcal{P}^k}
f(x_1,\ldots,x_k)\,
\det\!\bigl[K(x_i,x_j)\bigr]_{i,j=1}^k\,
\mathrm{d}\nu^{\otimes k}(x_1,\ldots,x_k).
\end{equation}
The function $\K$ is called a \emph{kernel} of the DPP, and $\nu$ the \emph{background measure}. If $\K(x,y) = \overline{\K(y,x)}$, we call $\K$ a Hermitian kernel.
\end{definition}

An important setting in machine learning is when the background space $\mathcal{P}$ is a finite set, which could be intepreted as a set of data points. The standard reference measure on $\mathcal{P}$ is the counting measure. In this discrete setting, DPPs have the following simple characterization: 

\begin{definition}[Discrete DPPs]
    Let $\mathcal{P}$ be a finite set of $n$ points. A random subset $\mathcal{S} \subset \mathcal{P}$ is a DPP on $\mathcal{P}$ if there exists an $n\times n$-matrix $\K$ (indexed by elements of $\mathcal{P}$) such that 
    \[
\mathbb{P}(T\subseteq \mathcal{S})=\det(\K_T), \qquad \forall\, T\subseteq\mathcal{P},
\]
where $\K_T$ denotes the submatrix of $\K$ indexed by the subset $T\subset \mathcal{P}$.
\end{definition}

Determinantal point processes (DPPs) have emerged as an attractive framework for repulsive sampling in machine learning, and several advanced algorithms have been developed for efficient DPP sampling. For further discussion on DPPs, particularly from a machine learning perspective, we refer to the survey \cite{kulesza2012determinantal} and the recent survey \cite{tran2025negative}.

\noindent \textbf{DPPs as coresets. }Given a dataset $\P = \{X_1,\dots,X_n\}\subseteq\mathcal{X}$  consisting of $n$ iid samples from a distribution $\mu$ and a function $f:\mathcal{X}\to\mathbb{R}_{\ge 0}$, define
\begin{equation}
    L(f) :=  \sum_{i=1}^{n} f(X_i).
    \label{eq:empirical-mean}
\end{equation}
Given a coreset $\mathcal{S}\subseteq P$ with associated weights $w:\mathcal{S}\to\mathbb{R}_{\ge 0}$, define the \emph{weighted coreset estimator} as $L_{\mathcal{S}}(f) := \sum_{x\in\mathcal{S}} w(x)\,f(x)$.
We can now define a weighted coreset estimator using a Determinantal Point Process as follows.

\begin{definition}[Determinantal Coreset Estimator]
Sample $\mathcal{S} \subset \mathcal{P}$ by a DPP with Kernel $\mathbf{K}$ and with reference measure $\nu$ as the counting measure on $\mathcal{P}$. We choose the weight $\omega(x) = \mathbf{K}(x, x)^{-1}$. Thus $L_{\mathcal{S}}(f) = \sum_{x\in\mathcal{S}}  \mathbf{K}(x, x)^{-1}f(x)$. By \eqref{eq:DPPdefn}, this choice of $\omega$ makes $L_{\mathcal{S}}(f)$ an unbiased estimator for $L(f)$. In this case, we say that $\S$ is sampled by a \emph{\DPP}.
\end{definition} 

\section{Coreset Construction beyond Worst-Case}
\label{sec:coreset-bound}

\subsection{General determinantal sampler guarantees}

We begin with a uniform coreset bound for a general parametric loss class under determinantal sampling, which we later specialize to clustering objectives. Let
$
\mathcal F=\{f_\theta:\theta\in\parameterspc\}
$
be a class of nonnegative functions indexed by a metric space \((\parameterspc,\dist_\parameterspc)\). (Note that the space \((\parameterspc,\dist_\parameterspc)\) is different from the metric space $(\X,d)$ where the points in datasets belong.) We assume:
\begin{enumerate}[label=\textbf{(A.\arabic*)}]
  \item \label{asm:class}
    The parameter space \((\parameterspc,\dist_\parameterspc)\) is bounded and has parametric covering dimension \(\totaldim\): there exists \(c_\parameterspc>0\) such that
    \[
    \mathcal N(\parameterspc,u,\dist_\parameterspc)\le \left(\frac{c_\parameterspc}{u}\right)^{\totaldim},
    \qquad 0<u\le 1.
    \]

  \item \label{asm:lipschitz}
    The map \(\theta\mapsto f_\theta(x)^{1/z}\) is \(1\)-Lipschitz uniformly in \(x\): for all
    \(\theta,\theta'\in\parameterspc\) and all \(x\in\mathcal X\),
    \[
    \left|f_\theta(x)^{1/z}-f_{\theta'}(x)^{1/z}\right|
    \le \dist_\parameterspc(\theta,\theta').
    \]
\end{enumerate}

For the \((k,z)\)-clustering loss, these assumptions are natural. Taking
$
f_\theta(x)=\dist(x,\theta)^z,
$
where \(\theta\) denotes a \(k\)-tuple of centers, we verify in the next section that \(\mathcal F\) satisfies Assumptions~\ref{asm:class} and~\ref{asm:lipschitz} with \(\totaldim=dk\). Thus \(\totaldim\) can be viewed as the effective dimension of the parameter space; in the standard Euclidean setting, it coincides with the usual parameter dimension.

For technical reasons, we will also need some assumption which is essentially equivalent to the requirement that the dataset $\{x_1,..., x_n\}$ is non-degenerate. We will later show (in Theorem~\ref{thm:app-empirical-bound}) that this ensuing assumption will hold with high probability when the data is, for instance, drawn independently from a subgaussian measure.
\begin{enumerate}[label=\textbf{(B.\arabic*)}]
   \item \label{asm:mult error}  
    $\frac{1}{n}|L(f)|\geq c$, for some $c>0$, uniformly on $\mathcal{F}$.
\end{enumerate}

\begin{theorem}[DPP Coreset Bound for General $(k,z)$-Clustering]
\label{thm:kz-clustering-dpp} 
Let \(\mathcal S\) be drawn from a  \DPP with a Hermitian kernel \(\mathbf K\) and with respect to the counting measure of a finite set \(\mathcal P=\{x_1,\dots,x_n\}\subset \mathcal{X}\). Let \(m=\mathbb E[|\mathcal S|]\). Assume that for all \(i\in\{1,\dots,n\}\), \(\mathbf K(x_i,x_i)\ge \rho\cdot m/n\) for some \(\rho>0\) not depending on \(m,n\). Suppose $V \geq \sup_{\theta\in\parameterspc}\mathrm{Var}\!\left[\frac1nL_{\mathcal S}(f_\theta)\right]$ and \(|\mathcal S|\le B\cdot m\) almost surely for some \(B>0\). Then under \ref{asm:class}, \ref{asm:lipschitz},\ref{asm:mult error} and  for some constants $c_1, c_2, c_3$ depending on $\parameterspc,B,\rho,z,c$ we have, for any $0\le \varepsilon\le
\frac{c_3 mV}{\sup_{\theta\in\parameterspc}\|f_\theta\|_\infty}$,
\[
\mathbb P\!\left(\exists f\in\mathcal F:\left|\frac{L_{\mathcal S}(f)}{L(f)}-1\right|\ge \varepsilon\right)
\le
2\exp\!\left(c_1\totaldim-\totaldim\log\varepsilon-\frac{c_2\varepsilon^2}{V}\right).
\]

\end{theorem}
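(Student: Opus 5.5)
Our plan is a standard chaining-free net argument. We combine a pointwise concentration inequality for linear statistics of DPPs with a union bound over a net of $\parameterspc$, and then extend from the net to all of $\mathcal F$ using the Lipschitz assumption~\ref{asm:lipschitz}.

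\textbf{Step 1: pointwise concentration.} Fix $\theta$ and write $Z_\theta=\frac1n L_{\mathcal S}(f_\theta)=\frac1n\sum_{x\in\mathcal S}\K(x,x)^{-1}f_\theta(x)$. This is a linear statistic of a Hermitian DPP with test function $g=\frac1n \K(x,x)^{-1}f_\theta(x)$, and its mean is $\frac1nL(f_\theta)$. By the lower bound on the diagonal, $\|g\|_\infty\le \|f_\theta\|_\infty/(\rho m)$. We then invoke the Bernstein-type concentration for linear statistics of Hermitian DPPs (Breuer--Duits / Pemantle--Peres: Hermitian DPPs have the strong Rayleigh property, so Lipschitz functionals concentrate). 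This gives
\[
\mathbb P\bigl(|Z_\theta-\mathbb E Z_\theta|\ge t\bigr)\le 2\exp\!\Bigl(-\frac{c\,t^2}{\var(Z_\theta)+\|g\|_\infty t}\Bigr).
\]
For $t\lesssim mV/\sup\|f\|_\infty$, the variance term dominates, and the bound becomes $2\exp(-c't^2/V)$. This is where the restriction $\varepsilon\le c_3 mV/\sup_\theta\|f_\theta\|_\infty$ enters. Assumption~\ref{asm:mult error} converts the additive deviation into a multiplicative one: choose $t=\varepsilon c/2$, so that $|Z_\theta/\mathbb EZ_\theta-1|\ge \varepsilon/2$ implies $|Z_\theta-\mathbb E Z_\theta|\ge c\varepsilon/2$.

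\textbf{Step 2: net and union bound.} Take a $u$-net $N$ of $\parameterspc$ with $u=c''\varepsilon$. By~\ref{asm:class}, $|N|\le (c_\parameterspc/(c''\varepsilon))^{\totaldim}=\exp(c_1\totaldim-\totaldim\log\varepsilon)$. A union bound then gives the stated tail on the event that some $\theta\in N$ deviates by $\varepsilon/2$.

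\textbf{Step 3: extension off the net.} For $\theta$ arbitrary, pick $\theta'\in N$ with $\dist_\parameterspc(\theta,\theta')\le u$. By~\ref{asm:lipschitz} and the mean value inequality $|a^z-b^z|\le z|a-b|(a^{z-1}+b^{z-1})$, we get
\[
|f_\theta(x)-f_{\theta'}(x)|\le zu\bigl(f_\theta^{(z-1)/z}+f_{\theta'}^{(z-1)/z}\bigr)\le 2zuF^{(z-1)/z},
\]
where $F=\sup\|f\|_\infty$. This is finite since $\parameterspc$ is bounded, and in the clustering application it is controlled by the data diameter. Hence $|\frac1nL(f_\theta)-\frac1nL(f_{\theta'})|\le 2zuF^{(z-1)/z}$. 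Similarly, for the coreset,
\[
\Bigl|\tfrac1nL_{\mathcal S}(f_\theta)-\tfrac1nL_{\mathcal S}(f_{\theta'})\Bigr|\le \tfrac1n|\mathcal S|\max_x \K(x,x)^{-1}\,2zuF^{(z-1)/z}\le \tfrac{B}{\rho}\,2zuF^{(z-1)/z}.
\]
Here we use $|\mathcal S|\le Bm$ almost surely and $\K(x,x)^{-1}\le n/(\rho m)$. Dividing by $\frac1nL(f_\theta)\ge c$ shows that both ratios move by $O(u)$ with constants depending on $B,\rho,z,c,\parameterspc$. Choosing $c''$ small makes the total perturbation at most $\varepsilon/2$, so the $\varepsilon/2$ deviation on the net yields an $\varepsilon$ deviation everywhere.

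\textbf{Main obstacle.} The delicate step is Step~1: obtaining a sub-Gaussian tail in terms of the variance $V$, rather than a crude bound in terms of $\|g\|_\infty^2 m$, for a general Hermitian kernel. We would first try the Breuer--Duits result, which bounds the Laplace transform $\log\mathbb E e^{\lambda(Z-\mathbb EZ)}\le \lambda^2\var(Z)$ for $|\lambda|\|g\|_\infty\lesssim 1$. Optimizing $\lambda=t/(2V)$, which is admissible precisely when $t\lesssim V/\|g\|_\infty\sim mV/F$, gives the sub-Gaussian regime. If that tool requires $\K$ to be a projection, we would reduce to the projection case by the standard mixture representation of Hermitian DPPs. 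This also explains why the almost-sure bound $|\mathcal S|\le Bm$ is assumed.
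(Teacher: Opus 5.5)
\textbf{Steps 1 and 2 are fine and match the paper.} The paper also uses a Breuer--Duits type, variance-based tail for the linear statistic (via Theorem~\ref{thm:thm1_bardenet}), restricted to $t\lesssim V/\|g\|_\infty$; this is where the range $\varepsilon\le c_3mV/\sup_\theta\|f_\theta\|_\infty$ comes from. It then takes a union bound over a net of $\parameterspc$. Your direct use of (B.1) on the net is correct: an additive deviation below $c\varepsilon/2$ gives a ratio error below $\varepsilon/2$. Writing the tail through the Laplace-transform bound, with $V$ in place of $\mathrm{Var}$, is if anything more careful than the paper, whose quoted statement has its admissible range in terms of the true variance.

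\textbf{The gap is in Step~3.} Your estimate $|f_\theta-f_{\theta'}|\le 2zuF^{(z-1)/z}$ makes the off-net perturbation of the ratio of order $(1+B/\rho)\,zuF^{(z-1)/z}/c$. So you must take $u\asymp c\varepsilon/\bigl(z(1+B/\rho)F^{(z-1)/z}\bigr)$, and then $\log|N|$ contains a term $\frac{z-1}{z}\totaldim\log F$. Your $c_1$ therefore depends on $F=\sup_{\theta}\max_{x\in\mathcal P}f_\theta(x)$. This quantity is not determined by $\parameterspc,B,\rho,z,c$. Boundedness of $\parameterspc$ together with (A.2) only gives $f_\theta(x)^{1/z}\le f_{\theta_0}(x)^{1/z}+\mathrm{diam}(\parameterspc)$, and $f_{\theta_0}$ on the data is unconstrained; in clustering, $F$ grows with the distance of the data from the candidate centers. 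So for $z>1$ you prove the inequality only with a data-dependent $c_1$. That changes only constants when $F$ is bounded, as in Corollary~\ref{cor:kz-coreset-size}, but it is not the statement as claimed.

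\textbf{The fix is to run the net argument on $z$-th roots, where the Lipschitz constant is scale-free.}
\begin{itemize}
\item $(\frac1nL_{\mathcal S}(f_\theta))^{1/z}$ is a weighted $\ell^z$ norm of $(f_\theta(x)^{1/z})_{x\in\mathcal S}$, with total weight $\frac1n\sum_{x\in\mathcal S}\mathbf{K}(x,x)^{-1}\le B/\rho$.
\item By Minkowski's inequality and (A.2), $\mathcal E(\theta)=(\frac1nL_{\mathcal S}(f_\theta))^{1/z}-(\frac1nL(f_\theta))^{1/z}$ is Lipschitz with constant $1+(B/\rho)^{1/z}$, with no dependence on $F$.
\item Assumption (B.1) converts in both directions. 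First, $\sup_\theta|\mathcal E(\theta)|<\tilde\varepsilon:=\varepsilon c^{1/z}/(2z)$ forces ratio error below $\varepsilon$ when $\varepsilon\le 1$. Second, $|\mathcal E(\theta)|\ge\tilde\varepsilon/2$ forces $|\frac1nL_{\mathcal S}(f_\theta)-\frac1nL(f_\theta)|\ge\frac{\tilde\varepsilon}{2}c^{(z-1)/z}$, using $|a^z-b^z|\ge|a-b|\,b^{z-1}$ for $a,b\ge0$, $z\ge1$.
\item A $\tilde\varepsilon/(2+2(B/\rho)^{1/z})$-net then has cardinality at most $\exp(c_1\totaldim-\totaldim\log\varepsilon)$, with $c_1$ depending only on $c_{\parameterspc},B,\rho,z,c$. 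Your Step~1 then finishes the proof.
\end{itemize}

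\textbf{Two smaller remarks.} No projection or mixture reduction is needed, since the paper applies the variance-based bound directly to Hermitian kernels. A mixture decomposition would also not obviously give each component variance at most $V$. Finally, the hypothesis $|\mathcal S|\le Bm$ enters through the off-net Lipschitz step, as your own Step~3 shows, not through the concentration step.
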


\begin{proof}
Define 
\[
\mathcal{E}(\theta):=\left(\frac{L_{\mathcal{S}}(f_\theta)}{n}\right)^{1/z}-\left(\frac{L(f_\theta)}{n}\right)^{1/z}, \qquad \tilde\varepsilon:=\varepsilon c^{1/z}/(2z).
\]
Then a simple algebraic inequality (detailed in Proposition~\ref{prop:alg_ineq_1} in appendix) implies that under assumption~\ref{asm:mult error} we have,
\[
\mathbb{P}\!\left(\exists\,f\in\mathcal{F}:\left|\frac{L_{\mathcal{S}}(f)}{L(f)}-1\right|\ge\varepsilon\right)
\le
\mathbb{P}\!\left(\sup_{\theta\in\parameterspc}|\mathcal{E}(\theta)|\ge\tilde\varepsilon\right).
\]

Let $\theta,\theta'\in\parameterspc$.
By Assumption~\ref{asm:lipschitz},
$|f_\theta(x)^{1/z}-f_{\theta'}(x)^{1/z}|\le \dist_\parameterspc(\theta,\theta')$
for all $x\in\mathcal{P}$.
Thus by Minkowski's inequality applied to $L_{\mathcal{S}}(f)/n = \sum_{x\in\mathcal{S}}  \mathbf{K}(x, x)^{-1}n^{-1}f(x)$,
\[
\Bigl|(L_\mathcal{S}(f_\theta)/n)^{1/z} - (L_\mathcal{S}(f_{\theta'})/n)^{1/z}\Bigr|
\le \dist_\Theta(\theta,\theta')
    \Bigl(\sum_{x\in\mathcal{S}}\tfrac{1}{nK(x,x)}\Bigr)^{1/z}
\]
Since $\mathbf{K}(x,x)\ge\rho m/n$ by~(i) and $|\mathcal{S}|\le Bm$, we have
 a.s.\,
\[
\abs{\left(L_{\mathcal{S}}(f_{\theta})/n\right)^{1/z}-\left(L_{\mathcal{S}}(f_{\theta'})/n\right)^{1/z}}
\le
(B/\rho)^{1/z}\,\dist_\parameterspc(\theta,\theta').
\]
The same argument with unit weights gives
$\abs{\left(L(f_{\theta})/n\right)^{1/z}-\left(L(f_{\theta'})/n\right)^{1/z}}
\le \dist_\parameterspc(\theta,\theta')$,
and therefore
\[
|\mathcal{E}(\theta)-\mathcal{E}(\theta')|
\le c_0\,\dist_\parameterspc(\theta,\theta')\quad\text{almost surely with},
\qquad
c_0 := 1+(B/\rho)^{1/z}.
\]

Let $\Gamma$ be a $\tilde\varepsilon/(2c_0)$-net for $\parameterspc$. Then
$
\sup_{\theta\in\parameterspc}|\mathcal{E}(\theta)|
\le
\sup_{\theta'\in\Gamma}|\mathcal{E}(\theta')|+\frac{\tilde\varepsilon}{2},
$
so
$
\mathbb{P}\!\left(\sup_{\theta\in\parameterspc}|\mathcal{E}(\theta)|\ge\tilde\varepsilon\right)
\le
\sum_{\theta'\in\Gamma}
\mathbb{P}\!\left(|\mathcal{E}(\theta')|\ge\frac{\tilde\varepsilon}{2}\right).
$

Fix $\theta'\in\Gamma$. Again some simple algebra detailed in Proposition~\ref{prop:alg_ineq_1} (in appendix) implies that under assumption~\ref{asm:mult error} we have,
\begin{equation}   \label{eq:Etheta_less_LsfLf}
\mathbb{P}\!\left(|\mathcal{E}(\theta')|\ge\frac{\tilde\varepsilon}{2}\right)
\le
\mathbb{P}\!\left(\frac{1}{n}\bigl|L_{\mathcal{S}}(f_{\theta'})-L(f_{\theta'})\bigr|
\ge
\frac{\tilde\varepsilon}{2}\,c^{(z-1)/z}\right).
\end{equation}

 Using equation~\eqref{eq:DPPdefn} it is easy to verify that $\mathbb{E}\left[L_{\mathcal{S}}(f_{\theta'})\right] = L(f_{\theta'})$. We can thus conclude the results by using standard concentration results on Determinantal Samplers given by e.g. \cite{bardenet2024small},\cite{breuer2014nevai}.  
In particular using Theorem~\ref{thm:thm1_bardenet}~(Theorem~1 of \cite{bardenet2024small}) stated in appendix~\ref{app:conc_bound} with $\phi(x) = f(x) / K(x,x)$ to the R.H.S. of equation~\eqref{eq:Etheta_less_LsfLf}, we obtain
\[
\mathbb{P}\!\left(|\mathcal{E}(\theta')|\ge\frac{\tilde\varepsilon}{2}\right)
\le
2\exp\!\left(-\frac{c^{2(z-1)/z}\tilde\varepsilon^2}{16A V}\right)
\]
Using $\mathbf{K}(x,x)\ge\rho m/n$ the above inequality holds for all $0\le\tilde\varepsilon\le 4A\rho m V/(3c^{(z-1)/z}\|f_\theta\|_\infty)$.

 Substituting $\tilde\varepsilon=\varepsilon c^{1/z}/(2z)$ 
and writing $c_2 = \frac{c^2}{64Az^2}$, and $c_3 = \frac{8zA\rho}{3c} $, we obtain
\[
\mathbb{P}\!\left(|\mathcal{E}(\theta')|\ge\frac{\tilde\varepsilon}{2}\right)
\le
2\exp\!\left(-\frac{c_2\varepsilon^2}{V}\right),
\]
and the constraint on $\tilde\varepsilon$ translates to
$0\le\varepsilon\le \frac{c_3 mV}{\|f_\theta\|_\infty}$.

Recall, $\Gamma$ is a $\tilde\varepsilon/(2c_0)$-net for $\parameterspc$. By Assumption~\ref{asm:class},
$|\Gamma|\le(2c_0 c_\parameterspc/\tilde\varepsilon)^{\totaldim}$.
After enlarging $c_1$ if necessary, $\log|\Gamma|\le c_1 \totaldim-\totaldim\log\varepsilon$.
Combining all estimates,
\[
\mathbb{P}\!\left(\exists\,f\in\mathcal{F}:\left|\frac{L_{\mathcal{S}}(f)}{L(f)}-1\right|\ge\varepsilon\right)
\le
2\exp\!\left(c_1 \totaldim - \totaldim\log\varepsilon - \frac{c_2\varepsilon^2}{V}\right)
\]
for all $0\le\varepsilon\le\min\!\left\{1,\,c_3 mV/\|f_\theta\|_\infty\right\}$.
\end{proof}

\subsection{Euclidean $(k,z)$-Clustering}
\label{sec:k-z-coreset}

In the ensuing section, we now apply Theorem~\ref{thm:kz-clustering-dpp} to the $(k,z)$ problem with $\mathcal{X} = \mathbb{R^d}$ to derive the following Corollary. We will then show that Theorem~\ref{thm:main-informal} holds.

\begin{corollary}[Coreset Size for Euclidean $(k,z)$-Clustering]
\label{cor:kz-coreset-size}
Given $\varepsilon, \delta \geq 0$, let \(\mathcal S\) be drawn from a  \DPP with a Hermitian kernel \(\mathbf K\) and with respect to the counting measure of a finite set \(\mathcal P=\{x_1,\dots,x_n\} \subset \mathbb{R}^d\). Let \(m=\mathbb E[|\mathcal S|]\). 
Assume that for all \(i\in\{1,\dots,n\}\), \(\mathbf K(x_i,x_i)\ge \rho\cdot m/n\) for some \(\rho>0\) not depending on \(n\). Suppose $ \sup_{\theta\in\parameterspc}\mathrm{Var}\!\left[\frac1nL_{\mathcal S}(f_\theta)\right] = \mathcal{O}(m^{-1-1/d}) $ and \(|\mathcal S|\le B\cdot m\) almost surely for some constant \(B>0\). Further  suppose for some constants, $c$ and $c'$, $m$ satisfies
\[
c'\cdot\frac{1}{\varepsilon^d} \ge m \ge c
\left(
\frac{kd + kd\log(1/\varepsilon) + \log(2/\delta)}{\varepsilon^2}
\right)^{\!\frac{d}{d+1}},
\]
Then under assumption~\ref{asm:mult error} on data, $\mathcal{S}$ is an $\varepsilon$-coreset with probability at least $1-\delta$ (over the randomness of the coreset construction algorithm).

\end{corollary}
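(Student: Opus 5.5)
The plan is to specialize Theorem~\ref{thm:kz-clustering-dpp} to $\mathcal X=\mathbb R^d$ with $f_\theta(x)=\dist(x,\theta)^z$, and then choose $m$ so that the failure bound is at most $\delta$.

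\textbf{Step 1: verify \ref{asm:class} and \ref{asm:lipschitz}.} Take $\parameterspc$ to be the set of $k$-tuples of centers in a bounded region (a ball of radius $R$, after normalizing the data). Use the metric $\dist_\parameterspc(\theta,\theta')=\max_j\|\eta_j-\eta'_j\|$, minimized over matchings of the centers. This space is a bounded subset of $\mathbb R^{dk}$, so its covering number is at most $(c_\parameterspc/u)^{dk}$. Hence \ref{asm:class} holds with $\totaldim=dk$. For \ref{asm:lipschitz}, note that $f_\theta(x)^{1/z}=\min_j\|x-\eta_j\|$. Each map $\eta\mapsto\|x-\eta\|$ is $1$-Lipschitz, and a minimum of $1$-Lipschitz functions is again $1$-Lipschitz under the max-matching metric. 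Boundedness also gives $\sup_\theta\|f_\theta\|_\infty\le D$ for some constant $D$ depending only on $R$ and $z$.

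\textbf{Step 2: apply the theorem.} Since $V=O(m^{-1-1/d})$, set $V=C_0m^{-1-1/d}$ in Theorem~\ref{thm:kz-clustering-dpp}. The admissible range becomes $\varepsilon\le c_3C_0m^{-1/d}/D$. This is equivalent to $m\le (c_3C_0/D)^d\varepsilon^{-d}$, which is exactly the upper constraint $m\le c'\varepsilon^{-d}$ with $c'$ chosen accordingly. It is sufficient to make the variance bound hold with equality: any $V$ that upper-bounds the variance is allowed, and we choose the one that makes the range constraint match.

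The theorem then gives failure probability at most
\[
2\exp\!\left(c_1kd+kd\log(1/\varepsilon)-\frac{c_2\varepsilon^2m^{1+1/d}}{C_0}\right).
\]
This is at most $\delta$ provided
\[
m^{(d+1)/d}\ge \frac{C_0}{c_2}\cdot\frac{c_1kd+kd\log(1/\varepsilon)+\log(2/\delta)}{\varepsilon^2}.
\]
Raising both sides to the power $d/(d+1)$ and absorbing constants gives the lower bound on $m$ in the statement.

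\textbf{Step 3: conclude.} On the complementary event, every $\theta$ satisfies $L_{\mathcal S}(f_\theta)\in(1\pm\varepsilon)L(f_\theta)$, so $\mathcal S$ with weights $\mathbf K(x,x)^{-1}$ is an $\varepsilon$-coreset.

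\textbf{Main obstacle: restricting $\parameterspc$ to a bounded set.} Centers range over all of $\mathbb R^d$, but the theorem requires a bounded parameter space. I would handle this with the standard reduction. Centers far from the data, at distance more than a large constant times the data diameter, can be replaced by a nearest point on the boundary of an enlarged ball. Alternatively, far-away centers can be handled directly: for them the cost ratio is dominated by a single term that both $L$ and $L_{\mathcal S}$ estimate well, since $\sum_{x\in\mathcal S}\mathbf K(x,x)^{-1}$ approximates $n$. Getting the constants in this reduction right, and also absorbing the $\varepsilon\le1$ restriction, is where care is needed. The remaining steps are direct substitutions.
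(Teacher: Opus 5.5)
Your Steps 1--3 are essentially the paper's proof. The paper verifies \ref{asm:class} and \ref{asm:lipschitz} with $\totaldim=kd$ using the Hausdorff metric on $k$-subsets of a bounded set $K$ (your bottleneck-matching metric works equally well), bounds $\sup_\theta\|f_\theta\|_\infty$ by $\mathrm{diam}(K)^z$, reads the admissible range of Theorem~\ref{thm:kz-clustering-dpp} as $m\le c'\varepsilon^{-d}$, and solves the failure bound for $m$. Your explicit choice $V:=C_0m^{-1-1/d}$ justifies the range step more cleanly than the paper does, since the paper plugs an upper bound on $V$ into a condition that actually needs $V$ to be large.

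The paper never performs the unbounded-center reduction you flag; it simply posits a bounded $K$ containing all candidate centers, so you may drop that step to match it. If you do pursue the reduction, it is not just a matter of constants. In either of your sketches, the radius beyond which centers count as far must grow as $\varepsilon\to0$. That inflates $\sup_\theta\|f_\theta\|_\infty$ (and typically $V$), which feeds into both the admissible range and the rate of Theorem~\ref{thm:kz-clustering-dpp}.
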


Before proceeding with the proof argument, observe that Corollary~\ref{cor:kz-coreset-size} along with Remarks~\ref{rem:dataassumption} and ~\ref{rem:OPE_DPP_consequences1} immediately imply Theorem~\ref{thm:main-informal}.

\begin{proof}[Proof Sketch]
    We verify that the Euclidean \((k,z)\)-clustering loss class satisfies the
hypotheses of Theorem~\ref{thm:kz-clustering-dpp}. Equip the space
\(\parameterspc\) of \(k\)-center sets with the Hausdorff metric \(\dist_H\). The
nearest-center map is Lipschitz under this metric: if \(\eta^*\in\theta\) is a
closest center to \(x\), then some \(\eta'\in\theta'\) lies within
\(\dist_H(\theta,\theta')\) of \(\eta^*\), so the triangle inequality gives
\(\dist(x,\theta')-\dist(x,\theta)\le \dist_H(\theta,\theta')\); reversing the roles of
\(\theta\) and \(\theta'\) yields
$
|\dist(x,\theta)-\dist(x,\theta')|\le \dist_H(\theta,\theta').
$

Since \(f_\theta(x)^{1/z}=\dist(x,\theta)\), Assumption~\ref{asm:lipschitz}
follows. Moreover, because the candidate centers lie in a bounded subset
\(K\subset\mathbb R^d\), a Euclidean \(u\)-net of \(K\) of size
\((C_K/u)^d\) induces a \(u\)-net of \(\parameterspc\) of size
\((C_K/u)^{kd}\). Thus Assumption~\ref{asm:class} holds with
\(\totaldim=kd\). The boundedness of \(K\) also gives
\(\sup_{\theta}\|f_\theta\|_\infty<\infty\), and the admissible-range condition
of Theorem~\ref{thm:kz-clustering-dpp}, namely
\(\varepsilon\le c_3 mV/\sup_{\theta}\|f_\theta\|_\infty\), together with
the variance scaling \(V=\mathcal O(m^{-(1+1/d)})\), gives
\(\varepsilon=\mathcal O(m^{-1/d})\), or equivalently \(m=\mathcal O(\varepsilon^{-d})\).
This is precisely the upper bound \(m\le c'\cdot\varepsilon^{-d}\) stated in
the corollary, ensuring the theorem is applied within its valid regime.
Applying Theorem~\ref{thm:kz-clustering-dpp} with \(\totaldim=kd\) gives
\[
\mathbb P(\mathrm{failure})
\le
2\exp\!\left(
c_1kd-kd\log\varepsilon-\frac{c_2\varepsilon^2}{V}
\right).
\]
Requiring this probability to be at most \(\delta\), substituting
\(V=\mathcal O(m^{-(1+1/d)})\), and solving for \(m\) gives
\[
m
\gtrsim
\left(
\frac{
kd+kd\log(1/\varepsilon)+\log(2/\delta)
}{
\varepsilon^2
}
\right)^{d/(d+1)},
\]
which is the lower bound in the corollary. Combining both bounds shows that
any \(m\) in the stated range guarantees failure probability at most \(\delta\),
so \(\mathcal{S}\) is an \(\varepsilon\)-coreset with probability at least
\(1-\delta\).
\end{proof}

\paragraph{Discussions on Assumptions}
\begin{remark}[On the assumption \ref{asm:mult error}] \label{rem:dataassumption}
Recall that Assumption \ref{asm:mult error} requires
\(
L(f)/n \ge c
\)
uniformly over \(f\in \mathcal{F}\). However, since the dataset \(\mathcal{P}\) is randomly generated,
\(
L(f)=\sum_{i=1}^n f(X_i)
\)
depends on the particular realization of the data, making the verification of Assumption \ref{asm:mult error} potentially delicate. We address this issue through a technical result in the appendix; see Theorem \ref{thm:app-empirical-bound}. This result verifies Assumption \ref{asm:mult error} with high probability in the dataset $\mathcal{P}$, under mild assumptions on the data distribution \(\mu\).
\end{remark}

\begin{remark}[Variance reduction and discretized multivariate OPE]\label{rem:OPE_DPP_consequences1}
A key ingredient in the above corollary is the variance bound
\(V = \mathcal{O}(m^{-1-1/d}),\)
which is achieved by the discretized multivariate OPE kernel constructed in
\cite{bardenet2021dpp_sgd,bardenet2024small}. More precisely, this DPP satisfies
\(\var[n^{-1}L_{\mathcal{S}}(f)]=\mathcal{O}(m^{-1-1/d})\)
for any sufficiently regular test function \(f\), with high probability over the dataset
\(\mathcal{P}=\{X_1,\dots,X_n\}\), where the \(X_i\) are i.i.d. samples from a distribution supported on a \(d\)-dimensional hypercube. For a more detailed discussion of the construction of these discrete OPE kernels, we refer to
\cite{bardenet2021dpp_sgd,bardenet2024small} and the references therein.

We remark that the property
\(
V = \mathcal{O}(m^{-1-\alpha})\)
for some \(\alpha>0\), often referred to as \emph{variance reduction}, is one of the key features of DPP samplers. It has played an important role in proving theoretical guarantees that DPP-based samplers can perform better than i.i.d. sampling; see, for example, \cite{bardenet2020monte} for DPP-based quadratures, \cite{bardenet2021dpp_sgd} for DPP-based minibatches, and \cite{bardenet2024small} for DPP-based coresets. An ongoing line of work investigates this phenomenon for substantially broader classes of continuous kernels, together with mechanisms for transferring the corresponding variance-reduction guarantees to discrete kernels.

\end{remark}

\begin{remark}
    Corollary~\ref{cor:kz-coreset-size} gives a trade-off between the size $m$ of a DPP-based coreset and the accuracy rate $\varepsilon$. In particular, for an OPE-based coreset of size $m$ with $V=\mathcal{O}(m^{-1-1/d})$, one obtains the high-probability accuracy rate $\varepsilon = \mathcal{O}(m^{-1/2-1/(2d)})$, whereas independent-sampling coresets achieve $\varepsilon=\mathcal{O}(m^{-1/2})$.
\end{remark}

\section{Experiments}
\label{sec:experiments}

We evaluate our DPP-based coreset samplers against standard baselines,
including state-of-the-art near-optimal sensitivity construction by~\cite{huang2024optimal} (abbreviated HLW). Two questions drive the
evaluation: whether DPP samplers exhibit the faster error decay predicted by
theory, and whether this advantage persists on real-world clustering benchmarks.
All algorithms are implemented in Python~3 and run on an Apple MacBook with a
10-core M5 chip and 8GB of RAM, using \texttt{NumPy}, \texttt{SciPy}, and
\texttt{scikit-learn}~\cite{pedregosa2011scikit}.
The OPE feature construction uses the \texttt{MultivariateJacobiOPE} class
from \texttt{DPPy}~\cite{gautier2019dppy}.
For each method and coreset size $m$ we draw $N_{\mathrm{REP}} = 150$
independent coresets and evaluate them on a fixed family of
$N_{\mathrm{QUERY}} = 150$ random $k$-tuples of data points (as potential centers), drawn once per
dataset $k$ pair.
We report the average-case relative error over this query family;
supplementary supremum-style metrics appear in
Appendix~\ref{app:supp-benchmark}.

\noindent \textbf{Methods.}
We compare seven coreset samplers; each takes a dataset $\mathcal{P}\subset\mathbb{R}^d$
and a target size $m$ and returns a weighted subset $\mathcal{S}\subset\mathcal{P}$.
For a query center set $C$, the full data loss is
$L(C):=\frac{1}{n}\sum_{x\in\mathcal{P}} \dist^2(x,C)$, and the coreset estimate is
$L_{\mathcal S}(C):=\sum_{x\in\mathcal S}w(x)\dist^2(x,C)$, with method-specific
weights $w$ chosen to estimate the average loss $L(C)$. For independent
importance sampling with probabilities $p_x$, we use the unbiased weight
$w(x)=1/(mnp_x)$ per sampled copy; for a discrete DPP with marginal kernel $K$,
we use $w(x)=1/(nK(x,x))$.

\noindent\textbf{Classical baselines.}
Uniform draws $m$ points uniformly with replacement in $O(m)$ time. Sensitivity ~\cite{langberg2010universal} is the importance-sampling coreset
using the practical sensitivity upper bounds of Bachem~et~al.~\cite{bachem2017practical}
from a $D^2$-weighted initialisation~\cite{ArthurV07}, in time $O(nk+nm)$.
HLW~\cite{huang2024optimal} is the near-optimal construction by Huang, Li,
and Wu, based on a bicriteria $D^2$-approximation, ring decomposition, and per-ring
sensitivity bounds. Uniform has the usual Monte Carlo $m^{-1/2}$ error rate for a fixed query family,
but is not a worst-case clustering coreset. Sensitivity sampling gives the
standard sensitivity-based $\widetilde O(S\varepsilon^{-2})$ coreset size, where
$S$ is the total sensitivity upper bound. For $k$-means in Euclidean space,
HLW gives the near-optimal worst-case bound
$\widetilde O(k^{3/2}\varepsilon^{-2})$.

\noindent\textbf{Negative-dependence (DPP) samplers.}
Stratified ~\cite{bardenet2024small} is a simple projection DPP for
$\mathcal{P}\subset[-1,1]^d$: it partitions $[-1,1]^d$ into a grid of $m$ equal bins
and draws one point uniformly from $\mathcal{P}$ in each occupied bin, running in
$O(nm)$. Its limitation is immediate: it relies on the data being sufficiently
well-spread so that many bins are occupied, which is unlikely for non-uniform or
high-dimensional datasets.
DPP-Haar is a dyadic-cell Haar-wavelet projection DPP: it partitions
$[-1,1]^d$ into $2^{jd}$ cells with $j\approx\log_2(m)/d$, samples one representative
per selected non-empty cell, and reweights by cell and in-cell selection probabilities.
DPP-OPE ~\cite{bardenet2024small} (Example~5 therein) is the discretised
orthogonal polynomial ensemble, implemented via \texttt{MultivariateJacobiOPE} from
\texttt{DPPy}~\cite{gautier2019dppy}; its Jacobi reference-measure parameters are fitted
coordinatewise from marginal moments with a $10\%$ uniform fallback, it runs in
$O(nm^2)$, and it is restricted to $d\le 6$.
DPP-Vandermonde ~\cite{tremblay2019determinantal} (Algorithm~2) is a projection
DPP built from the monomial Vandermonde matrix with graded total-degree multi-indices and
an economy QR factorisation in $O(nm^2)$; it requires no density estimation and applies
up to $d\le 16$. We call DPP-OPE and DPP-Vandermonde the polynomial DPPs.

\noindent \textbf{Datasets.}
All datasets are normalized coordinatewise to $[-1,1]^d$.
The synthetic collection comprises six datasets: Uniform
($n=982$, $d=2$), Trimodal ($n=981$, $d=2$), Gaussian Blobs 2D ($n=4,800$,$d=2$), Gaussian Blobs 3D ($n=4,702$, $d=3$), Anisotropic Gaussians ($n=2,880$, $d=2$),
and BIRCH Benchmark ($n=9,600$, $d=2$).
Uniform 2D is drawn uniformly from $[-1,1]^2$. Trimodal 2D comprises three
isotropic Gaussian clusters with standard deviations $0.04$, $0.15$, and
$0.40$. Gaussian Blobs 2D and Gaussian Blobs 3D are five well-separated isotropic Gaussian blobs (std $0.3$)
generated with \texttt{make\_blobs}. Anisotropic Gaussians 2D contains three
anisotropic Gaussian clusters, two of which are elongated along rotated axes.
BIRCH Benchmark is a subsampled instance of 100 Gaussian blobs, used as a
low-dimensional many-cluster benchmark~\cite{zhang1997birch}. All these synthetic datasets are sub-Gaussian and thus satisfy our required assumptions with high probability.

For real-world evaluation, we use three datasets that collectively probe
different dimensionalities and data geometries.
3D Road Network ($n=9,424$, $d=3$) is the
\emph{3D Road Network (North Jutland, Denmark)} dataset from the
{UCI} Machine Learning Repository~\cite{kaul2013roadnetwork}.
UCI Adult (PCA-2) ($n=9,338$, $d=2$) is the
{UCI} Adult dataset~\cite{becker1996adult}, represented by its six numeric attributes (age,
fnlwgt, education-num, capital-gain/loss, hours-per-week) and then projected
onto the top two principal components via PCA~\cite{pedregosa2011scikit};
the $d=2$ setting enables all DPP samplers and gives a clean benchmark
for the polynomial constructions on real economic data.
Twitter GPS ($n=9,644$, $d=2$)~\cite{chan2018twitter}
consists of geotagged longitude-latitude coordinates; its strong
spatial clustering structure makes it particularly well-suited to spatial DPP methods.

\noindent \textbf{Performance metric.}
We report the \emph{mean relative error} $\overline{\mathrm{RelErr}}(m)$,
defined as the average over $N_{\mathrm{REP}}=150$ independent coreset draws
of the per-draw average relative error:
\[
  \overline{\mathrm{RelErr}}(m)
  \;:=\;
  \frac{1}{N_{\mathrm{REP}}}\sum_{r=1}^{N_{\mathrm{REP}}}
  \frac{1}{N_{\mathrm{QUERY}}}
  \sum_{j=1}^{N_{\mathrm{QUERY}}}
  \frac{\lvert L_{\mathcal{S}^{(r)}}(C_j)-L(C_j)\rvert}{L(C_j)},
\]
where $\mathcal{S}^{(r)}$ denotes the $r$-th independently drawn coreset and the
query centers $C_1,\dots,C_{N_{\mathrm{QUERY}}}$ are fixed random $k$-tuples
drawn once per dataset-$k$ pair.
Coreset sizes range over $m\in\{1^2,2^2,\ldots,29^2\}$ ($29$ values),
clipped to $m\le 300$ for datasets with $n<5{,}000$ and to $m\le n/3$
in all cases. We fix $k=5$ throughout; supplementary variance and
supremum-style plots appear in Appendix~\ref{app:supp-benchmark}.

\noindent \textbf{Research questions.}
\begin{enumerate}[leftmargin=*]
  \item \textbf{(RQ1) Faster error decay.}
        Do DPP-based samplers achieve a steeper log--log decay of
        $\overline{\mathrm{RelErr}}(m)$, improving upon the $m^{-1/2}$ rate of
        i.i.d.\ methods, as predicted by Theorem~\ref{thm:kz-clustering-dpp}?
  \item \textbf{(RQ2) Smaller coreset on real data.}
        Do DPP methods attain a given target error with a smaller coreset than
        i.i.d.\ baselines on real-world benchmarks?
\end{enumerate}

\noindent \textbf{Results: RQ1}
Across all six synthetic datasets (Figure~\ref{fig:lg_scle_synth}), DPP-OPE and
DPP-Vandermonde show a visibly steeper log--log decay than the i.i.d.\ baselines
Uniform, Sensitivity, and HLW~\cite{huang2024optimal}. On the $d=2$ benchmarks,
the empirical slope is consistent with the predicted $m^{-(1/2+1/4)}$ rate from
Theorem~\ref{thm:kz-clustering-dpp}, whereas on Gaussian Blobs 3D the gap narrows
to the expected $m^{-(1/2+1/6)}$ regime. DPP-Haar matches or exceeds DPP-OPE and DPP-Vandermonde on the more spatially regular datasets. Stratified improves over
Uniform when the data are well spread, but degrades on clustered or strongly
non-uniform datasets, exactly as expected from its grid-based construction
~\cite{bardenet2024small}.

\begin{figure}[htbp]
  \centering
  \begin{subfigure}[b]{0.25\textwidth}
    \includegraphics[width=\textwidth]{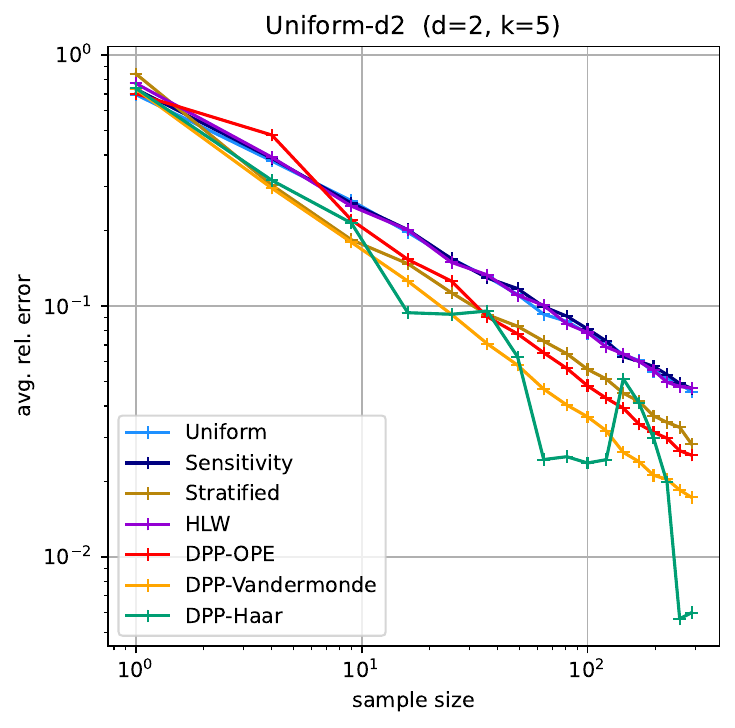}
    \caption{\textbf{Uniform}}
  \end{subfigure}
  \hfill
  \begin{subfigure}[b]{0.25\textwidth}
    \includegraphics[width=\textwidth]{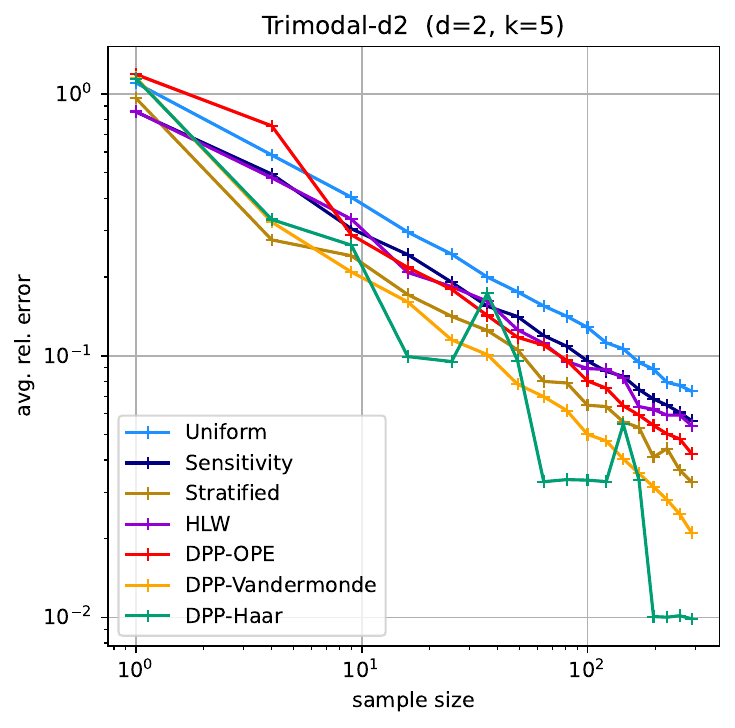}
    \caption{\textbf{Trimodal}}
  \end{subfigure}
  \hfill
  \begin{subfigure}[b]{0.25\textwidth}
    \includegraphics[width=\textwidth]{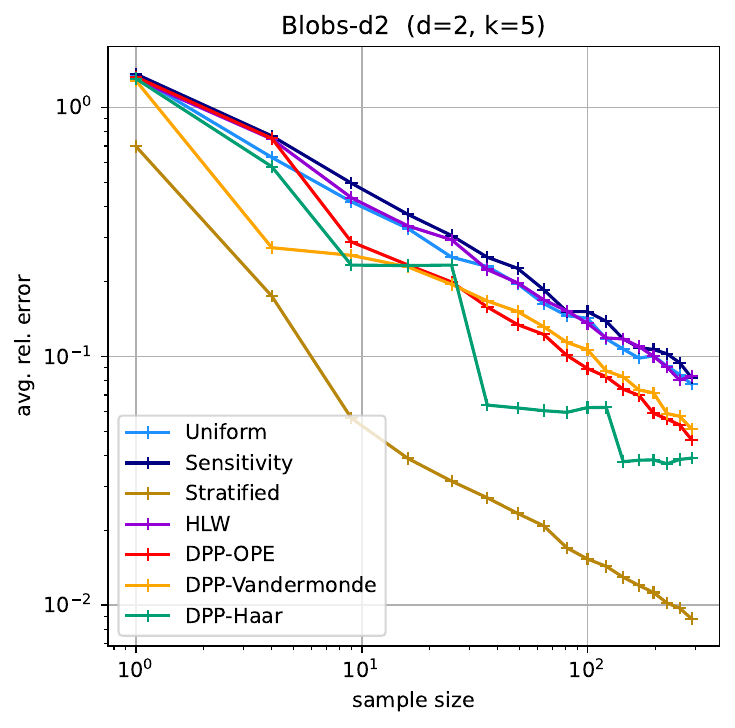}
    \caption{\textbf{Gaussian Blobs 2D}}
  \end{subfigure}
  \\[4pt]
  \begin{subfigure}[b]{0.25\textwidth}
    \includegraphics[width=\textwidth]{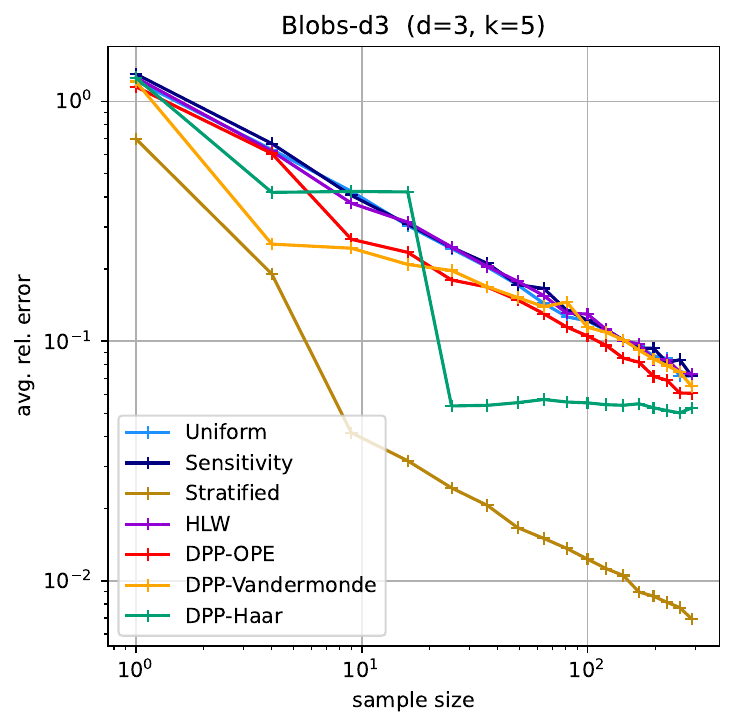}
    \caption{\textbf{Gaussian Blobs 3D}}
  \end{subfigure}
  \hfill
  \begin{subfigure}[b]{0.25\textwidth}
    \includegraphics[width=\textwidth]{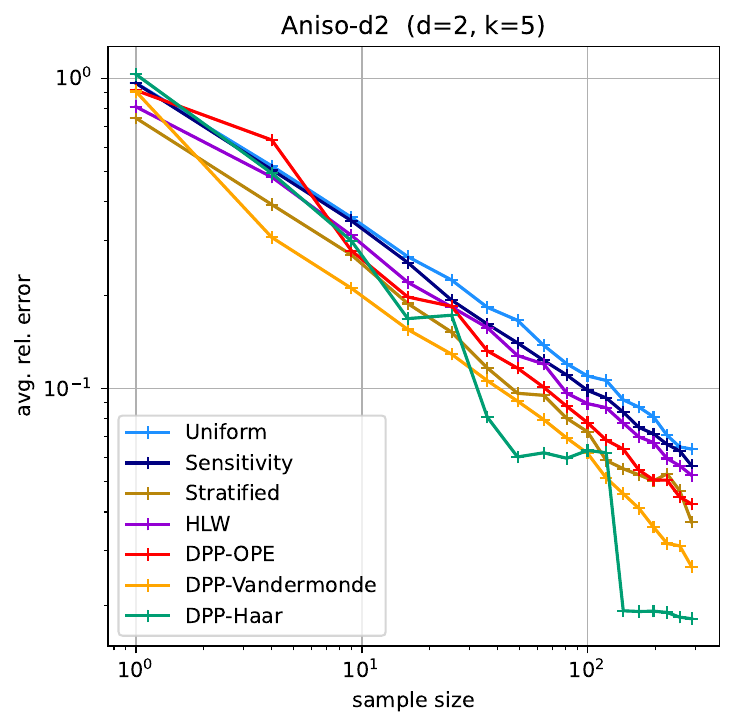}
    \caption{\textbf{Anisotropic Gaussian}}
  \end{subfigure}
  \hfill
  \begin{subfigure}[b]{0.25\textwidth}
    \includegraphics[width=\textwidth]{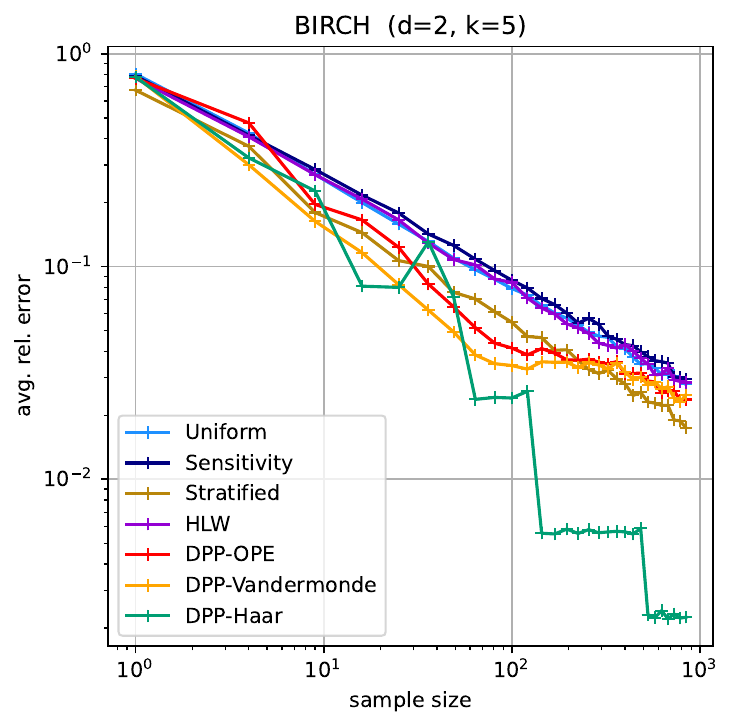}
    \caption{\textbf{BIRCH}}
  \end{subfigure}
  \caption{$\overline{\mathrm{RelErr}}(m)$ vs.\ $m$ (log--log, $k=5$), synthetic datasets.}
  \label{fig:lg_scle_synth}
\end{figure}

\noindent \textbf{Results: RQ2}
Figure~\ref{fig:lg_scl_real} shows the same qualitative pattern on real data:
for moderate and large $m$, at least one DPP sampler reaches a given error with
fewer points than the independent baselines. The gains are smaller on 3D Road
Network, consistent with the weaker $d=3$ rate. On UCI Adult (PCA-2), DPP-OPE
and DPP-Vandermonde are strongest, while on Twitter GPS, DPP-Haar performs best,
suggesting that its multiscale spatial partition is well matched to geographic
data. Stratified is less reliable on these non-uniform datasets, in line with
its occupancy-based limitations~\cite{bardenet2024small}.

\begin{figure}[htbp]
  \centering
  \begin{subfigure}[b]{0.25\textwidth}
    \includegraphics[width=\textwidth]{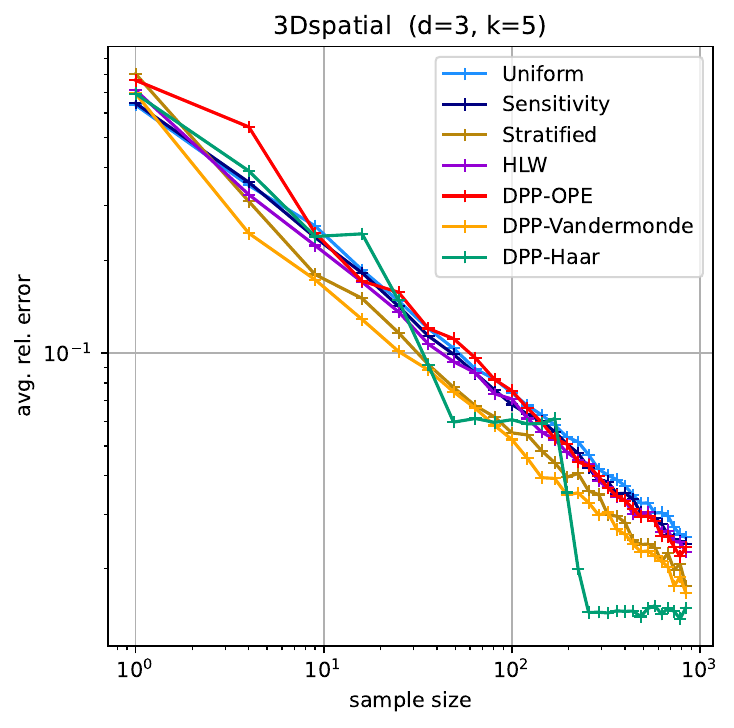}
    \caption{\textbf{3D Road Network}}
  \end{subfigure}
  \hfill
  \begin{subfigure}[b]{0.25\textwidth}
    \includegraphics[width=\textwidth]{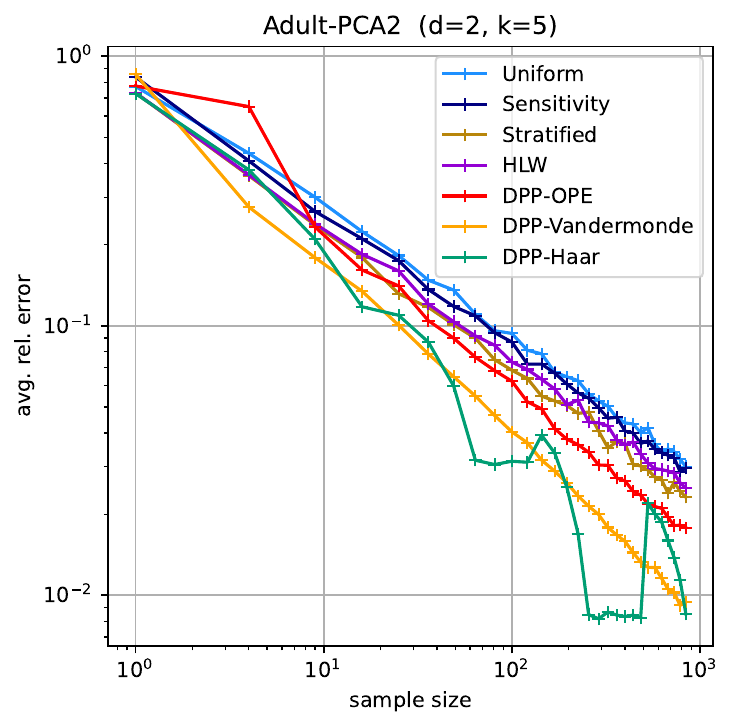}
    \caption{\textbf{UCI Adult (PCA-2)}}
  \end{subfigure}
  \hfill
  \begin{subfigure}[b]{0.25\textwidth}
    \includegraphics[width=\textwidth]{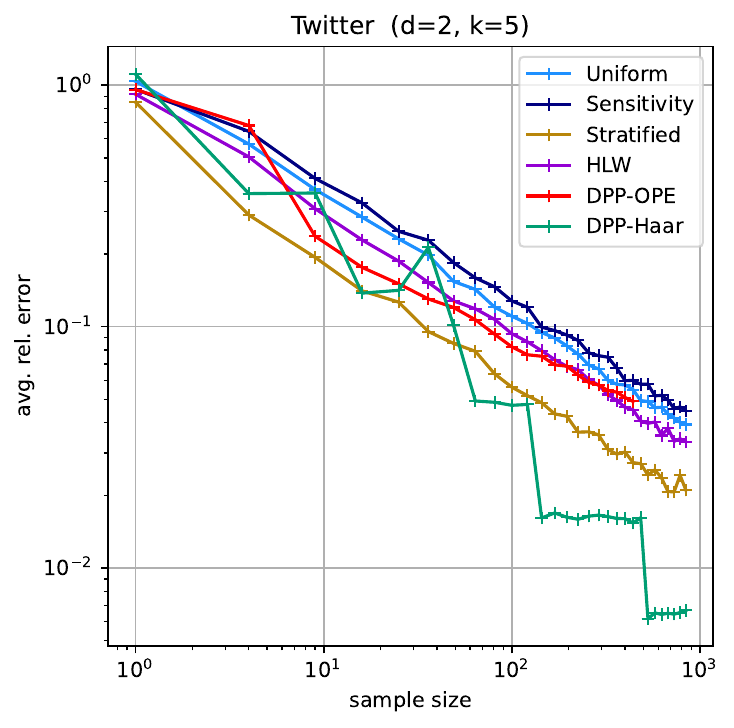}
    \caption{\textbf{Twitter GPS}}
  \end{subfigure}
  \caption{$\overline{\mathrm{RelErr}}(m)$ vs.\ $m$ (log--log, $k=5$), real-world datasets.}
  \label{fig:lg_scl_real}
\end{figure}

\section{Conclusion and Future Work}
In this paper, we study the $(k,z)$-clustering problem from a beyond-worst-case perspective and show that substantially smaller coresets are possible under natural assumptions on the input distribution. 
To achieve this, we introduce determinantal sampling 
and use it to obtain stronger guarantees than those available through standard independent sampling techniques. Beyond its theoretical significance, we show the efficacy of our method by performing experiments on various standard datasets. Overall, our results provide further evidence that realistic structural assumptions can meaningfully improve the landscape of coreset construction for clustering.

It would be interesting to relax or further generalize the distributional assumptions used in our analysis, and to better understand the weakest conditions under which one can still beat the worst-case lower bounds. Another important open question is whether similar improvements can be achieved for broader classes of clustering objectives, more general metric spaces, or in high-dimensional settings without relying on dimensionality reduction. Finally, it would be valuable to explore whether the determinantal-sampling framework can be extended beyond clustering to other coreset problems in regression, subspace approximation, and related summarization tasks.

\clearpage

\bibliographystyle{alpha}
\bibliography{biblio}

\newpage

\appendix

\noindent

\section{Technical results related to Theorem~\ref{thm:kz-clustering-dpp}}
\label{app:proofs-section3}
\subsection{A simple algebraic identity}
\label{app:alg_ineq}

\begin{proposition}\label{prop:alg_ineq_1}
  For $\theta\in\parameterspc$, define
\[
Y(\theta):=\frac{L_{\mathcal{S}}(f_\theta)}{n},\qquad
Z(\theta):=\frac{L(f_\theta)}{n},\qquad
\mathcal{E}(\theta):=Y(\theta)^{1/z}-Z(\theta)^{1/z},
\]
and let $\tilde\varepsilon:=\varepsilon c^{1/z}/(2z)$. Then  
    \[
\mathbb{P}\!\left(\exists\,f\in\mathcal{F}:\left|\frac{L_{\mathcal{S}}(f)}{L(f)}-1\right|\ge\varepsilon\right)
\le
\mathbb{P}\!\left(\sup_{\theta\in\parameterspc}|\mathcal{E}(\theta)|\ge\tilde\varepsilon\right)
\le
\mathbb{P}\!\left(\frac{1}{n}\bigl|L_{\mathcal{S}}(f_{\theta})-L(f_{\theta})\bigr|
\ge
\frac{\tilde\varepsilon}{2}\,c^{(z-1)/z}\right).
\]

\end{proposition}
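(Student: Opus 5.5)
The plan is to prove both inequalities pointwise in $\theta$, as implications between events, and then pass to probabilities by monotonicity. Throughout, write $a:=Y(\theta)^{1/z}$ and $b:=Z(\theta)^{1/z}$. By Assumption~\ref{asm:mult error}, $Z(\theta)\ge c$, so $b\ge c^{1/z}>0$; also $Y(\theta)\ge 0$ because the weights and the $f_\theta$ are nonnegative. I will assume $0\le\varepsilon\le 1$, which is the regime in which Theorem~\ref{thm:kz-clustering-dpp} is eventually applied, and I will flag this restriction explicitly.

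\textbf{First inequality.} Fix $\theta$ with $|Y/Z-1|\ge\varepsilon$, and set $r:=Y/Z=(a/b)^z$. Then $|\mathcal{E}(\theta)|=b\,|r^{1/z}-1|\ge c^{1/z}|r^{1/z}-1|$, so it suffices to show $|r^{1/z}-1|\ge \varepsilon/(2z)$. I split into two cases.
\begin{itemize}
\item If $r\ge 1+\varepsilon$, then by the mean value theorem $r^{1/z}-1\ge (1+\varepsilon)^{1/z}-1=\tfrac1z(1+\xi)^{1/z-1}\varepsilon$ for some $\xi\in(0,\varepsilon)$. Since $1+\xi\le 2$ and $1/z-1\in(-1,0]$, we get $(1+\xi)^{1/z-1}\ge 2^{-1}$, giving the bound $\varepsilon/(2z)$.
\item If $r\le 1-\varepsilon$, then $1-r^{1/z}\ge 1-(1-\varepsilon)^{1/z}\ge \varepsilon/z$. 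This follows by the same mean value argument, now with derivative $\tfrac1z(1-\xi)^{1/z-1}\ge \tfrac1z$.
\end{itemize}
Hence the event on the left is contained in $\{\sup_\theta|\mathcal{E}(\theta)|\ge\tilde\varepsilon\}$.

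\textbf{Second inequality.} This step is also pointwise, and it is how it is used in the proof of Theorem~\ref{thm:kz-clustering-dpp}: for a fixed $\theta$ (the net point $\theta'$ there), if $|\mathcal{E}(\theta)|\ge\tilde\varepsilon/2$, then $\tfrac1n|L_{\mathcal S}(f_\theta)-L(f_\theta)|\ge \tfrac{\tilde\varepsilon}{2}c^{(z-1)/z}$. By the mean value theorem,
\[
|Y-Z|=|a^z-b^z|=z\,\zeta^{z-1}|a-b|
\]
for some $\zeta$ between $a$ and $b$. I would rather not locate $\zeta$ precisely; instead I use the elementary bound $|a^z-b^z|\ge z\max(a,b)^{z-1}|a-b|$ for $z\ge1$, which holds by monotonicity of $t\mapsto t^{z-1}$ together with $a^z-b^z=\int_b^a z t^{z-1}\,dt$. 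Then $\max(a,b)\ge b\ge c^{1/z}$ and $z\ge 1$ give $|Y-Z|\ge c^{(z-1)/z}|a-b|$, which is the claim. (If the statement is read with the supremum on the middle term, the same argument applies to each $\theta$ and one takes a union over a net, exactly as in the main proof.)

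\textbf{Main obstacle.} I expect the only delicate point to be bookkeeping rather than mathematics. First, the constant $2z$ in $\tilde\varepsilon$ requires $\varepsilon\le 1$, or at least bounded, in the upper case; for larger $\varepsilon$ the derivative lower bound degrades. Second, the middle-to-right inequality must be interpreted at a fixed $\theta$ with threshold $\tilde\varepsilon/2$, matching \eqref{eq:Etheta_less_LsfLf}.
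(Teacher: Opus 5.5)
Your first inequality is correct, and it takes a different route from the paper. The paper proves the contrapositive via the telescoping identity $r-1=(r^{1/z}-1)\sum_{i=0}^{z-1}r^{i/z}$ and bounds each summand by $e^{\varepsilon/2}<2$; this tacitly requires $z\in\mathbb{N}$. Your mean-value argument for $t\mapsto t^{1/z}$ proves the event inclusion directly, with the non-strict inequalities exactly as stated, and it works for every real $z\ge 1$. Both routes need $\varepsilon\le 1$.

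The second inequality, however, rests on a false lemma. Monotonicity of $t\mapsto t^{z-1}$ in $a^z-b^z=\int_b^a zt^{z-1}\,dt$ gives
\[
z\min(a,b)^{z-1}\,|a-b|\;\le\;|a^z-b^z|\;\le\;z\max(a,b)^{z-1}\,|a-b|,
\]
so the $\max$ version is an \emph{upper} bound. For example, with $z=2$, $a=2$, $b=1$ you get $3<4$. The valid lower bound involves $\min(a,b)$, and you cannot control it. Only $b=Z(\theta)^{1/z}$ is bounded below by $c^{1/z}$; $a=Y(\theta)^{1/z}$ comes from the random coreset and can be arbitrarily close to $0$, so the mean-value point $\zeta$ may lie near $a$.

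The conclusion $|Y-Z|\ge c^{(z-1)/z}|\mathcal{E}(\theta)|$ is still true, but it needs a case split. If $a\ge b$, then $zt^{z-1}\ge b^{z-1}$ on $[b,a]$. If $a<b$, write $b^z-a^z=b^{z-1}(b-a)+a\,(b^{z-1}-a^{z-1})\ge b^{z-1}(b-a)$. In the second case the factor $z$ is genuinely lost (take $a=0$), which is why the target bound has none. This is what the paper obtains by keeping only the $i=0$ term $Z^{(z-1)/z}$ in $Y-Z=\mathcal{E}(\theta)\sum_{i=0}^{z-1}Y^{i/z}Z^{(z-1-i)/z}$. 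With the case split, your argument again covers non-integer $z$.
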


\begin{proof}
If $\sup_{\theta\in\parameterspc}|\mathcal{E}(\theta)|\le\tilde\varepsilon$, then
\[
\frac{Y(\theta)}{Z(\theta)}-1
=
\mathcal{E}(\theta)\,Z(\theta)^{-1/z}
\sum_{i=0}^{z-1}\left(\frac{Y(\theta)}{Z(\theta)}\right)^{i/z}.
\]
By simplifying the above, since $Z(\theta)\ge c$ by assumption~\ref{asm:mult error},
\[
\frac{Y(\theta)^{1/z}}{Z(\theta)^{1/z}}
=
1+\frac{\mathcal{E}(\theta)}{Z(\theta)^{1/z}}
\le
1+\frac{\tilde\varepsilon}{c^{1/z}}
=
1+\frac{\varepsilon}{2z}
\le e^{\varepsilon/(2z)}.
\]
Hence, for every $0\le i\le z-1$,
\[
\left(\frac{Y(\theta)}{Z(\theta)}\right)^{i/z}
=
\left(\frac{Y(\theta)^{1/z}}{Z(\theta)^{1/z}}\right)^i
\le e^{i\varepsilon/(2z)}
\le e^{\varepsilon/2}
< 2,
\]
provided $0\le\varepsilon\le 1$.
Therefore the sum is bounded by $2z$, and thus
\[
\left|\frac{Y(\theta)}{Z(\theta)}-1\right|
\le \tilde\varepsilon\, c^{-1/z}\cdot 2z
= \varepsilon.
\]

On the other hand since
\[
Y(\theta')-Z(\theta')
=
\mathcal{E}(\theta)\sum_{i=0}^{z-1}Y(\theta)^{i/z}Z(\theta)^{(z-1-i)/z},
\]
and all terms are nonnegative, keeping only the $i=0$ term gives
\[
|Y(\theta)-Z(\theta)|
\ge
|\mathcal{E}(\theta)|\,Z(\theta)^{(z-1)/z}
\ge
|\mathcal{E}(\theta)|\,c^{(z-1)/z}.
\]
Therefore
\begin{equation}
\mathbb{P}\!\left(|\mathcal{E}(\theta)|\ge\frac{\tilde\varepsilon}{2}\right)
\le
\mathbb{P}\left(\bigl|L_{\mathcal{S}}(f_{\theta})-L(f_{\theta})\bigr|
\ge
\frac{\tilde\varepsilon}{2}\,c^{(z-1)/z}\right).
\end{equation}
\end{proof}

\subsection{Concentration bounds of DPP}\label{app:conc_bound}
The following is a restatement of Theorem~1 in \cite{bardenet2024small} stated here for the reader's convenience:
\begin{theorem}[Theorem~1 in \cite{bardenet2024small}]\label{thm:thm1_bardenet}
Let $\mathcal{S}$ be a DPP on a Polish space $\mathcal{X}$ with reference measure $\mu$ and Hermitian kernel $K$. Then there is a universal constant $A$ such that for any bounded test function $\phi : \mathcal{X} \rightarrow \mathbb{R}$, and $\forall\ 0 \leq \epsilon \leq 2A \frac{\mathrm{Var}\left[\sum_{x \in \mathcal{S}} \phi(x)\right]}{3 \|\phi\|_{\infty}}$
 we have
\[
\mathbb{P}\left(\left|\sum_{x \in \mathcal{S}} \phi(x) - \mathbb{E}\left[\sum_{x \in \mathcal{S}} \phi(x)\right]\right| \geq \epsilon \right)
\leq
2 \exp\left({-\frac{\epsilon^2}{4A \mathrm{Var}\left[\sum_{x \in \mathcal{S}} \phi(x)\right]}}\right).
\]
\end{theorem}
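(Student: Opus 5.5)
The statement to prove is Theorem~\ref{thm:thm1_bardenet}, a Bernstein-type concentration inequality for linear statistics $\Lambda(\phi):=\sum_{x\in\mathcal S}\phi(x)$ of a Hermitian DPP. I would use the cumulant (Laplace transform) method.

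\textbf{Step 1: reduction to projection kernels.} First reduce to the case where the kernel is a trace-class contraction, so $0\le K\le I$. This is the only case in which a Hermitian DPP exists. By the Hough--Krishnapur--Peres--Vir\'ag mixture representation, $\mathcal S$ is then a mixture of projection DPPs.

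\textbf{Step 2: log-Laplace transform as a Fredholm determinant.} For bounded real $\phi$ and real $t$, write
\[
\log\mathbb E\, e^{t\Lambda(\phi)}=\log\det\bigl(I+(e^{t\phi}-1)K\bigr)=\sum_{j\ge1}\frac{t^j}{j!}\kappa_j(\phi).
\]
The cumulants admit the standard formula (Soshnikov)
\[
\kappa_j(\phi)=\sum_{m=1}^{j}\frac{(-1)^{m-1}}{m}\sum_{\substack{j_1+\dots+j_m=j\\ j_i\ge1}}\frac{j!}{j_1!\cdots j_m!}\,\mathrm{tr}\bigl(\phi^{j_1}K\cdots\phi^{j_m}K\bigr).
\]

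\textbf{Step 3: bound every cumulant by the variance.} The key claim is
\[
|\kappa_j(\phi)|\le (Cj)^{j}\,\|\phi\|_\infty^{\,j-2}\,\kappa_2(\phi),\qquad \kappa_2(\phi)=\mathrm{Var}\,\Lambda(\phi).
\]
The approach, following Breuer--Duits, is as follows.
\begin{itemize}
\item Rewrite each trace through commutators $[K,\phi]$. Because the combinatorial coefficients sum to zero, every term in $\kappa_j$ contains at least two commutator factors.
\item Use $\mathrm{Var}\,\Lambda(\phi)=\mathrm{tr}(\phi^2K)-\mathrm{tr}(\phi K\phi K)$. For $0\le K\le I$ this is at least $\tfrac12\|[K,\phi]\|_{HS}^2$, with equality for projections.
\item Bound the remaining factors in operator norm, each by $\|\phi\|_\infty$ or by $\|K\|\le1$.
\item Use H\"older's inequality for Schatten norms, $|\mathrm{tr}(AB)|\le\|A\|_{HS}\|B\|_{HS}$, to extract one HS-norm squared.
\end{itemize}
Counting the terms of the sum gives the growth constant $(Cj)^j$.

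\textbf{Step 4: sum the series and apply Chernoff.} Plugging in the cumulant bound yields
\[
\log\mathbb E\, e^{t(\Lambda-\mathbb E\Lambda)}\le A\,t^2\,\mathrm{Var}\,\Lambda(\phi)\qquad\text{for }|t|\le \frac{c}{\|\phi\|_\infty}.
\]
Apply the Chernoff bound and choose $t=\epsilon/(2A\,\mathrm{Var}\,\Lambda)$. This choice is admissible exactly when $\epsilon\lesssim A\,\mathrm{Var}/\|\phi\|_\infty$, which is the stated range. With this $t$ the exponent becomes $-\epsilon^2/(4A\,\mathrm{Var})$. Applying the same argument to $-\phi$ and taking a union bound gives the factor $2$.

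\textbf{Main obstacle.} The hard step is Step 3: proving that $\kappa_j$ is controlled by $\kappa_2$ with only factorial-type growth. Doing this by direct expansion is messy. I would try the commutator route first, since it works cleanly for projections. For general contractions I would fall back on the mixture representation from Step 1, but there is a gap: variances of the components do not simply average to the variance of the mixture. If that reduction fails, the fallback is to dilate $K$ to a projection kernel on an enlarged space $\mathcal X\times\{0,1\}$ and extend $\phi$ by zero on the second copy. This preserves both the law of $\Lambda(\phi)$ and its variance, so the projection case applies directly.
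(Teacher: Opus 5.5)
The paper does not prove this statement; it quotes it from \cite{bardenet2024small} and points to \cite{breuer2014nevai} for the underlying cumulant estimates. Your outline reconstructs that argument. Your choice of the dilation $P=\bigl(\begin{smallmatrix} K & \sqrt{K(I-K)}\\ \sqrt{K(I-K)} & I-K\end{smallmatrix}\bigr)$ over the mixture representation is the right call. It preserves the law of $\Lambda(\phi)$, and with $\tilde\phi$ the zero extension one has $\|[P,\tilde\phi]\|_{\mathrm{HS}}^2=2\,\mathrm{Var}\,\Lambda(\phi)$.

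The step that does not yet deliver the statement as written is the end of Step~4. The same $A$ appears in the range and in the exponent. So at the endpoint $\epsilon=2A\mathrm{Var}/(3\|\phi\|_\infty)$ the claim reads $\mathbb{P}(|\Lambda-\mathbb{E}\Lambda|\ge\epsilon)\le 2e^{-\epsilon/(6\|\phi\|_\infty)}$, an exponential rate that does not depend on $A$. Suppose you only know $\log\mathbb{E}e^{t(\Lambda-\mathbb{E}\Lambda)}\le At^2\mathrm{Var}$ for $|t|\le c/\|\phi\|_\infty$. Then Chernoff gives at best the rate $c/\|\phi\|_\infty$, so enlarging $A$ does not help when $c\le 1/6$. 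Your own choice $t=\epsilon/(2A\mathrm{Var})$ needs the bound on the whole window $|t|\le 1/(3\|\phi\|_\infty)$. A cumulant bound $(Cj)^j$ with unspecified $C$ does not give that, and ``$\lesssim$'' is not the stated range.

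The fix is to make Step~3 quantitative, which your method allows. For a projection, merge adjacent factors one at a time, using $K\phi^a(I-K)=[K,\phi^a](I-K)$, $(I-K)\phi^bK=(I-K)[\phi^b,K]$ and cyclicity. Each merge costs at most $\|[K,\phi^a]\|_{\mathrm{HS}}\|[K,\phi^b]\|_{\mathrm{HS}}\|\phi\|_\infty^{j-a-b}\le 2ab\|\phi\|_\infty^{j-2}\mathrm{Var}$. Hence $|\mathrm{tr}(\phi^{j_1}K\cdots\phi^{j_m}K)-\mathrm{tr}(\phi^{j}K)|\le 2j^2\|\phi\|_\infty^{j-2}\mathrm{Var}$. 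Combined with the zero-sum identity this gives
\[
|\kappa_j|\le 2j^2F_j\|\phi\|_\infty^{j-2}\mathrm{Var},\qquad F_j=\sum_{m}(m-1)!\,S(j,m)\le \frac{j!}{(\ln 2)^{j+1}},
\]
where the last bound is the standard estimate for Fubini numbers. The cumulant series is then bounded by a constant multiple of $t^2\mathrm{Var}\sum_{j\ge2}j^2r^{j-2}$, with $r=|t|\|\phi\|_\infty/\ln 2\le 1/(3\ln 2)<1$. The Fredholm expansion is also valid on this window, since $\|(e^{t\phi}-1)K\|\le e^{1/3}-1<1$. This yields a universal $A$ on exactly the required window, and your Chernoff step then gives the stated inequality.

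A minor technical point: for a diffuse $\mu$, the block $I-K$ on $\mathcal{X}\times\{0,1\}$ is not a locally trace-class kernel, so the dilated object is not literally a DPP. You can avoid this by truncating $K$ to finite rank $N$ and dilating into $L^2(\mu)\oplus\mathbb{C}^N$, which gives a genuine rank-$N$ projection DPP. Then pass to the limit in the log-Laplace bound. In the finite counting-measure setting used in this paper, your $2n\times 2n$ dilation is already a genuine projection DPP.
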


\section{Technical results related to Corollary~\ref{cor:kz-coreset-size}}

\subsection{Proof of Corollary~\ref{cor:kz-coreset-size}}
\label{app:prf_coro_corsize}

\begin{proof}
Let $K\subset\mathbb{R}^d$ be a bounded set containing all candidate centers,
let $\parameterspc$ be the family of unordered $k$-point subsets of $K$, equipped with
the Hausdorff metric $\dist_H$, and set $f_\theta(x)=\dist(x,\theta)^z$.
\[
\dist_H(\theta,\theta')
:=
\max\!\left\{
  \max_{\eta\in\theta}\min_{\eta'\in\theta'} \dist(\eta,\eta'),\;
  \max_{\eta'\in\theta'}\min_{\eta\in\theta} \dist(\eta,\eta')
\right\}.
\]
Then, for every $x\in\mathcal{X}$ and every $\theta,\theta'\in\parameterspc$,
\[
\bigl|\dist(x,\theta)-\dist(x,\theta')\bigr|\le \dist_H(\theta,\theta').
\]
Fix $x\in\mathcal{X}$ and $\theta,\theta'\in\parameterspc$.
Let $\eta^*\in\theta$ satisfy $\dist(x,\theta)=\dist(x,\eta^*)$.
By the definition of the Hausdorff metric, there exists $\eta'\in\theta'$ such
that $\dist(\eta^*,\eta')\le \dist_H(\theta,\theta')$.
Therefore
\[
\begin{aligned}
\dist(x,\theta')-\dist(x,\theta)
&\le \dist(x,\eta')-\dist(x,\eta^*) \\
&\le \dist(\eta^*,\eta') \\
&\le \dist_H(\theta,\theta'),
\end{aligned}
\]
where we used the reverse triangle inequality in the second step.
Interchanging the roles of $\theta$ and $\theta'$ gives
$\dist(x,\theta)-\dist(x,\theta')\le \dist_H(\theta,\theta')$.
Hence $\bigl|\dist(x,\theta)-\dist(x,\theta')\bigr|\le \dist_H(\theta,\theta')$.
Consequently, $f_\theta(x):=\dist(x,\theta)^z$, hence
\[
\bigl|f_\theta(x)^{1/z}-f_{\theta'}(x)^{1/z}\bigr|
= \bigl|\dist(x,\theta)-\dist(x,\theta')\bigr|
\le \dist_H(\theta,\theta').
\]
Assumption~\ref{asm:lipschitz} holds with $\dist_\parameterspc=\dist_H$.
Since $K$ is bounded in $\mathbb{R}^d$, there exists $C_K>0$ such that
$\mathcal{N}(K,u,\|\cdot\|)\le(C_K/u)^d$ for $0<u\le 1$.
Replacing each center of $\theta\in\parameterspc$ by its nearest point in a $u$-net
$N_u\subseteq K$ yields $\hat\theta$ with $\dist_H(\theta,\hat\theta)\le u$, so
\[
\mathcal{N}(\parameterspc,u,\dist_H)\le|N_u|^k\le\left(\frac{C_K}{u}\right)^{kd}.
\]
Hence Assumption~\ref{asm:class} holds with $\totaldim=kd$.
We next verify the upper bound on $\varepsilon$ required by
Theorem~\ref{thm:kz-clustering-dpp}.
Since $K$ is bounded, we have
\[
M := \sup_{\theta\in\parameterspc}\|f_\theta\|_\infty
= \sup_{\theta,x} \dist(x,\theta)^z
\le \mathrm{diam}(K)^z
< \infty.
\]
The theorem applies provided $\varepsilon \le c_3 m V / M$.
Substituting $V \le c_V m^{-(1+1/d)}$ gives
\[
\varepsilon \;\lesssim\; m^{-1/d},
\]
which is exactly the condition $\varepsilon=\mathcal{O}(m^{-1/d})$ stated in the
corollary. Hence all hypotheses of Theorem~\ref{thm:kz-clustering-dpp} are satisfied.
Applying Theorem~\ref{thm:kz-clustering-dpp},
\[
\mathbb{P}(\text{failure})
\le
2\exp\!\left(c_1 kd - kd\log\varepsilon - \frac{c_2\varepsilon^2}{V}\right).
\]
For this probability to be at most $\delta$, it suffices to have
\[
c_1 kd + kd\log\!\left(\tfrac{1}{\varepsilon}\right) + \log\!\left(\tfrac{2}{\delta}\right)
\le
\frac{c_2\varepsilon^2}{V}.
\]
Using $V\le c_V m^{-(1+1/d)}$ and rearranging,
\[
m^{\frac{d+1}{d}}
\ge
\frac{c_V}{c_2\varepsilon^2}
\!\left[c_1 kd + kd\log\!\left(\tfrac{1}{\varepsilon}\right) + \log\!\left(\tfrac{2}{\delta}\right)\right],
\]
and therefore
\[
m
\ge
\left(
\frac{c_V}{c_2\varepsilon^2}
\!\left[C_1 kd + kd\log\!\left(\tfrac{1}{\varepsilon}\right) + \log\!\left(\tfrac{2}{\delta}\right)\right]
\right)^{\!\frac{d}{d+1}}.
\]
The $\tilde{\mathcal{O}}$ form follows by absorbing all constants and logarithmic factors.
\end{proof}

\subsection{Discussion on Assumption~\ref{asm:mult error}} \label{app:mult_error}
\begin{definition}[Norm-tail subgaussian random vector]
Let \(X\in\mathbb R^d\) have finite mean. We say that \(X\) is
\(\sigma\)-subgaussian if
\[
    \mathbb P\left(
        \|X-\mathbb E[X]\|_2>t
    \right)
    \le
    2\exp\left(-\frac{t^2}{2\sigma^2}\right)
    \qquad
    \text{for all }t\ge 0.
\]
\end{definition}

\begin{theorem}[Uniform empirical lower bound for \(k\)-point \(z\)-cost] \label{thm:app-empirical-bound}
Let \(X_1,\ldots,X_n\) be i.i.d. samples from a $\sigma$-subgaussian
probability measure
\(\mu\) on \(\mathbb R^d\). 
Let \(k\ge 1\) and \(z\ge 1\), we define the population \(k\)-point \(z\)-cost of $\mu$ by
\[
    q_{k,z}(\mu)
    :=
    \inf_{1\le |\mathcal C|\le k}
    \mathbb E_{X\sim\mu}\bigl[\dist(X,\mathcal C)^z\bigr].
\]
Assume that
\(
    q:=q_{k,z}(\mu)>0,
\)
then there exist explicit constants
\[
    c=c(z,q,\sigma)>0
    \qquad\text{and}\qquad
    n_0=n_0(k,d,z,q,\sigma)\ge 1
\]
such that for every \(n\ge n_0\), we have
\[
    \mathbb P\left(
        \inf_{1\le |\mathcal C|\le k}
        \frac1n\sum_{i=1}^n \dist(X_i,\mathcal C)^z
        \ge
        \frac q2
    \right)
    \ge
    1-e^{-cn}.
\]
\end{theorem}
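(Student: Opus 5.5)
Our plan is to reduce the uniform statement to finitely many center sets in a bounded region, and to handle those with a bounded-differences concentration bound plus a union bound. We use truncation throughout, so that every variable involved is bounded.

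\textbf{Step 1: truncate.} Fix a level $\tau>0$ and set $g_{\mathcal C}(x):=\min\{\dist(x,\mathcal C)^z,\tau\}\le \dist(x,\mathcal C)^z$. It suffices to show that $\inf_{\mathcal C}\frac1n\sum_i g_{\mathcal C}(X_i)\ge q/2$ with the claimed probability.

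\textbf{Step 2: localize the centers.} Let $m:=\mathbb E[X]$ and $R_0:=c_0\sigma$, with $c_0$ chosen so that $\mu(B(m,R_0))\ge 3/4$. Since the data are subgaussian, Hoeffding gives, with probability at least $1-e^{-n/32}$, that at least half of the $X_i$ lie in $B(m,R_0)$. Take $R$ large, with $(R-R_0)^z\ge 2q$ and $\tau\ge (R-R_0)^z$.
\begin{itemize}
\item If every center of $\mathcal C$ lies outside $B(m,R)$, then each point in $B(m,R_0)$ contributes at least $(R-R_0)^z$. The empirical average is then at least $\tfrac12 (R-R_0)^z\ge q$.
\item Otherwise, consider any center far outside, say beyond $B(m,R')$ with $R'\gg R$. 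Removing it does not change $g_{\mathcal C}$ at points within distance $\tau^{1/z}$ of $m$, as long as $R'$ is large relative to $R$ and $\tau^{1/z}$. Points farther out already contribute $\tau$ or close to it.
\end{itemize}
This second case is the fiddly part. A cleaner route is to work directly with $\mathcal C\subset B(m,R')$ and treat the all-far case by the first bullet with a two-level radius. We expect Step 2 to be the main obstacle: making the reduction from arbitrary centers to centers in a compact ball rigorous while keeping the truncation consistent, since dropping a far center can only increase $\dist(x,\mathcal C)$ and so pushes in the helpful direction. We would first try the simpler monotonicity argument: replace $\mathcal C$ by $\mathcal C\cap B(m,R')$, which is nonempty by the first case, and check that $g$ decreases by at most a negligible amount on the half of the data near $m$.

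\textbf{Step 3: lower-bound the truncated population cost.} By monotone convergence in $\tau$, and since the population cost is at least $q$ for every $\mathcal C$, we want $\inf_{\mathcal C\subset B(m,R')}\mathbb E[g_{\mathcal C}(X)]\ge 3q/4$ for large $\tau$. The uniform tail bound
\[
\mathbb E[\dist(X,\mathcal C)^z\mathbf 1\{\dist^z>\tau\}]\le \mathbb E[(\|X-m\|+R')^z\mathbf 1\{\|X-m\|+R'>\tau^{1/z}\}]
\]
follows from subgaussianity and gives this uniformly in $\mathcal C$.

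\textbf{Step 4: net and union bound.} Take a $u$-net of $B(m,R')^k$ of size $(3R'/u)^{dk}$. On the ball, $g_{\mathcal C}$ is Lipschitz in $\mathcal C$ under the Hausdorff metric, with constant $z\tau^{(z-1)/z}$, by the argument in the proof of Corollary~\ref{cor:kz-coreset-size}. Choosing $u$ so that this Lipschitz constant times $u$ is at most $q/8$ makes the discretization error at most $q/8$. For each net point, Hoeffding applied to variables in $[0,\tau]$ gives $\mathbb P(\frac1n\sum g<\mathbb E g-q/8)\le e^{-nq^2/(32\tau^2)}$. The union bound then costs a factor $e^{dk\log(3R'/u)}$. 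This is absorbed for $n\ge n_0(k,d,z,q,\sigma)$, leaving $e^{-cn}$ with $c\asymp \min\{q^2/\tau^2,1/32\}$, which depends only on $(z,q,\sigma)$. Combining the events gives the average at least $3q/4-q/8-q/8=q/2$.
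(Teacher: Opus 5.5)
\paragraph{Verdict: genuine gap in Step 2.}
Your Step 2, which you flag as the main obstacle, never reduces arbitrary center sets to centers in a ball, and the mechanism you sketch fails for two reasons.

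\emph{The monotonicity runs the wrong way.} Replacing $\mathcal C$ by $\mathcal C':=\mathcal C\cap B(m,R')$ can only increase $\dist(x,\cdot)$, so $g_{\mathcal C'}\ge g_{\mathcal C}$ pointwise. A lower bound on $\frac1n\sum_i g_{\mathcal C'}(X_i)$ therefore says nothing about $\mathcal C$. You would have to show that $\frac1n\sum_i (g_{\mathcal C'}-g_{\mathcal C})(X_i)$ is small. That difference vanishes at samples near $m$, but at samples far from $m$ it can equal $\tau$: a discarded far center can sit exactly on a far data point. So the claim that far points ``already contribute $\tau$ or close to it'' is false for $\mathcal C$. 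Controlling this needs an extra concentration event on the fraction of samples outside a large ball, and your proposal has none.

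\emph{The constants are inconsistent.} Step 2 asks for $R'$ large relative to $\tau^{1/z}$. Step 3's tail term, however, is small only when $\tau^{1/z}$ exceeds $R'$ by a large margin. If $R'>\tau^{1/z}$, the indicator $\mathbf 1\{\|X-m\|+R'>\tau^{1/z}\}$ is identically $1$ and the bound is at least $(R')^z$. So the parameters cannot be chosen as you describe.

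\paragraph{The missing idea (the paper's route).}
The paper makes two choices that remove the problem.
\begin{itemize}
\item It truncates spatially rather than by level: $g_{\mathbf C}(x)=\dist(x,\mathbf C)^z\mathbf 1\{x\in K_R\}$, where $K_R$ is a ball around $\mathbb E[X]$.
\item It moves centers by Euclidean projection $\Pi$ onto the convex ball $K_R^+\supseteq K_R$, instead of deleting them.
\end{itemize}
Since $\Pi$ is $1$-Lipschitz and fixes $K_R$, we get $\dist(x,\Pi\mathbf C)\le\dist(x,\mathbf C)$ for $x\in K_R$, while $g_{\Pi\mathbf C}$ vanishes off $K_R$. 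Hence $\dist(x,\mathbf C)^z\ge g_{\Pi\mathbf C}(x)$ for every $x\in\mathbb R^d$, deterministically. This needs no case split and no additional probabilistic event. The rest then mirrors your Steps 3--4: the population bound $\mathbb E[g_{\mathbf C}(X)]\ge 3q/4$ on $(K_R^+)^k$ from the subgaussian moment bound, a Hausdorff net, Hoeffding, and a union bound.

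\paragraph{Repair within your framework.}
If you prefer to keep the level truncation, project onto $B(m,R')$ instead of intersecting with it. Then add a Hoeffding bound showing the fraction of samples outside $B(m,R')$ is $O(q/\tau)$. Take $\tau^{1/z}\asymp R'$, which already suffices for Step 3, and let $R'$ be large. The subgaussian tail $\mu(\|X-m\|>R')$ then decays faster than $\tau\asymp (R')^z$ grows, so this loss is a small multiple of $q$ after rebalancing the constants.
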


\begin{proof}
Let $X\sim \mu$ and
\(
    V:=\|X-\mathbb E[X]\|_2.
\)
By the subgaussian assumption,
\[
    \mathbb P(V>t)
    \le
    2\exp\left(-\frac{t^2}{2\sigma^2}\right),
    \qquad
    \text{for all }t\ge 0.
\]
Using the tail-integral formula,
\(
    \mathbb E[V^{2z}]
    =
    \int_0^\infty
        2z\,t^{2z-1}\mathbb P(V>t)
    \,dt,
\)
we get
\[
    \mathbb E[V^{2z}]
    \le
    2\int_0^\infty
        2z\,t^{2z-1}
        \exp\left(-\frac{t^2}{2\sigma^2}\right)
    \,dt
    =
    2(2\sigma^2)^z\Gamma(z+1) =:  A_{z,\sigma}.
\]

Let
\(
    R
    :=
    \max\{
        q^{1/z},
        (
            \frac{4\cdot 3^z A_{z,\sigma}}{q}
        )^{1/z}
    \}.
\)
Then \(R\ge q^{1/z}\), and by Markov's inequality,
\[
    \mathbb E\left[
        V^z\mathbf 1_{\{V>R\}}
    \right]
    \le
    \frac{\mathbb E[V^{2z}]}{R^z}
    \le
    \frac{A_{z,\sigma}}{R^z} \cdot
\]
Therefore, by the choice of \(R\),
\[
    3^z
    \mathbb E\left[
        V^z\mathbf 1_{\{V>R\}}
    \right]
    \le
    \frac{q}{4} \cdot
\]

Now define the compact set $K_R$ to be the ball of radius $R$ centered at $\mathbb{E}[X]$. And define compact set $K_R^+$ to be the ball of radius $R+q^{1/z}$ centered at $\mathbb{E}[X]$.
Note that if
\(x\in K_R\) and \(\theta\notin K_R^+\), then
\[
    \|x-\theta\|_2
    \ge
    \|\theta-\mathbb E[X]\|_2-\|x-\mathbb E[X]\|_2
    >
    R+q^{1/z}-R
    =
    q^{1/z}.
\]
Thus any in-ball sample \(x\in K_R\) whose closest center lies outside
\(K_R^+\) already contributes more than \(q\) to the \(z\)-cost.

Let $\textbf{C}$ be a \(k\)-tuple
\(
    \textbf{C}=(\theta_1,\ldots,\theta_k)\in (K_R^+)^k,
\)
We note that repeated centers are allowed, so \(k\)-tuples parametrize all center
sets of size at most \(k\). Then we define the truncated loss
\[
    g_{\textbf{C}}(x)
    :=
    \dist(x,\textbf{C})^z
    \mathbf 1_{\{x\in K_R\}}.
\]

We claim that
\begin{equation} \label{eq:exp-g-theta}
    \mathbb{E}[g_{\textbf{C}}(X)] \ge \frac{3q}{4} \qquad\text{for every } \textbf{C} \in (K_R^+)^k.
\end{equation} 
Indeed, for
\(\textbf{C}\in(K_R^+)^k\),
\[
    \mathbb E[ g_\textbf{C}(X)]
    =
    \mathbb E [\dist(X,\mathcal C_\Theta)^z]
    -
    \mathbb E\left[
        \dist(X,\textbf{C})^z
        \mathbf 1_{\{X\notin K_R\}}
    \right]
    \ge q -
    \mathbb E\left[
        \dist(X,\textbf{C})^z
        \mathbf 1_{\{X\notin K_R\}}
    \right]
    ,
\]
where we used
\(
    \mathbb E [\dist(X,\textbf{C})^z ] \ge q
\)
which follows by the definition of \(q\).
Now we observe that
on the event \(\{X\notin K_R\}\), we have \(V = \|X- \mathbb E[X]\|>R\). For every center
\(\theta_j\in K_R^+\), we have
\(
    \|\theta_j-\mathbb E[X]\|_2
    \le
    R+q^{1/z}
    \le
    2R.
\)
Hence, on \(\{X\notin K_R\}\), for $C_j \in K_R^+$
\[
    \|X-\theta_j\|_2
    \le
    \|X-\mathbb E[X]\|_2+\|\theta_j-\mathbb E[X]\|_2
    \le
    V+2R
    \le
    3V.
\]
Therefore,
\[
    \dist(X,\textbf{C})^z
    \mathbf 1_{\{X\notin K_R\}}
    \le
    3^z V^z\mathbf 1_{\{V>R\}}.
\]
Taking expectation gives
\[
    \mathbb E\left[
        \dist(X,\textbf{C})^z
        \mathbf 1_{\{X\notin K_R\}}
    \right]
    \le \mathbb{E}[3^z V^z\mathbf 1_{\{V>R\}}] \le 
    \frac {q}{4} \cdot
\]
Therefore
\[
    \mathbb E [g_\textbf{C}(X)]
    \ge
    q-\frac q4
    =
    \frac{3q}{4}
    \qquad
    \text{for every }\textbf{C}\in(K_R^+)^k.
\]

For \(x\in K_R\) and \(\textbf{C}\in(K_R^+)^k\),
\[
    \dist(x,\textbf{C})
    \le
    R+R+q^{1/z}
    \le
    3R.
\]
Thus
\[
    0\le g_\textbf{C}(x)\le (3R)^z
    \qquad
    \text{for all }x\in\mathbb R^d.
\]

As in the proof of Corollary~\ref{cor:kz-coreset-size}, for every \(x\in K_R\),
\[
    \abs{\dist(x,\textbf{C}) -  \dist(x,\textbf{C}')} \le \dist_H(\textbf{C},\textbf{C}')    
\]
Consider the function \(\varphi:t\mapsto t^z\) on \([0,3R]\), we have $|\varphi'(t)| \le z(3R)^{z-1}:=L_R $. Thus,
we have
\[
    |g_\textbf{C}(x)-g_{\textbf{C}'}(x)|
    \le
    L_R \dist_H(\textbf{C},\textbf{C}')
    \qquad
    \text{for all }x\in\mathbb R^d.
\]

Set
\(
    \delta:=q/(16L_R).
\)
Let \(\mathcal N_\delta\) be a \(\delta\)-net of \(K_R^+\). We may take
\(
    |\mathcal N_\delta|
    \le
    \left(1+\frac{4R}{\delta}\right)^d.
\)
Therefore \(\mathcal N_\delta^k\) is a \(\delta\)-net of
\((K_R^+)^k\), and
\(
    |\mathcal N_\delta^k|
    \le
    \left(1+\frac{4R}{\delta}\right)^{kd}.
\)
For each fixed \(\textbf{C}\in\mathcal N_\delta^k\), Hoeffding's inequality
gives
\[
    \mathbb P\left(
        \frac1n\sum_{i=1}^n g_\textbf{C}(X_i)
        <
        \mathbb E[g_\textbf{C}(X)] -\frac q8
    \right)
    \le
    \exp\left(
        -\frac{nq^2}{32(3R)^{2z}}
    \right).
\]
Taking a union bound over \(\mathcal N_\delta^k\), there is an event $\mathcal{E}$ with probability at
least
\[
    1-
    \left(1+\frac{4R}{\delta}\right)^{kd}
    \exp\left(
        -\frac{nq^2}{32(3R)^{2z}}
    \right),
\]
such that on $\mathcal{E}$ we have
\[
    \frac1n\sum_{i=1}^n g_\textbf{C}(X_i)
    \ge
    \mathbb E [g_\textbf{C}(X)]-\frac q8, \quad \forall \Theta\in\mathcal N_\delta^k.
\]
On this event, fix arbitrary
\(
    \textbf{C}\in(K_R^+)^k
\)
and choose \(\textbf{C}'\in\mathcal N_\delta^k\) such that
\(
   \dist_H(\textbf{C},\textbf{C}') \le \delta.
\)
By the Lipschitz bound, we have
\[
    \frac1n\sum_{i=1}^n g_\textbf{C}(X_i)
    \ge
    \frac1n\sum_{i=1}^n g_{\textbf{C}'}(X_i)
    -
    L_R\delta,
\]
and
\(
    \mathbb E [g_{\textbf{C}'}(X)]
    \ge
    \mathbb E[g_\textbf{C}(X)]-L_R\delta.
\)
Therefore,
\[
    \frac1n\sum_{i=1}^n g_\textbf{C}(X_i)
    \ge
    \mathbb E [g_\textbf{C}(X)]
    -
    \frac q8
    -
    2L_R\delta.
\]
Since \(\delta=q/(16L_R)\),
\(
    2L_R\delta=q/8.
\)
Thus
\[
    \frac1n\sum_{i=1}^n g_\textbf{C}(X_i)
    \ge
    \mathbb E [g_\textbf{C}(X)]-\frac q4.
\]
By \eqref{eq:exp-g-theta}, we have \(\mathbb E [g_\textbf{C}(X)]\ge 3q/4\). Thus, on the event $\mathcal{E}$:
\begin{equation} \label{eq:gC_positive_compact}
    \frac1n\sum_{i=1}^n g_\textbf{C}(X_i)
    \ge
    \frac q2
    \qquad
    \text{for every }\textbf{C}\in(K_R^+)^k.
\end{equation}

It remains to pass from centers in \(K_R^+\) to arbitrary centers in
\(\mathbb R^d\). Let
\(
    \Pi:\mathbb R^d\to K_R^+
\)
be Euclidean projection onto the closed convex set \(K_R^+\). Since
\(K_R\subseteq K_R^+\), projection can only decrease distance from
points \(x\in K_R\), i.e.
\[
    \|x-\Pi(\theta)\|_2
    \le
    \|x-\theta\|_2
    \qquad
    \text{for all }x\in K_R,\ \theta\in\mathbb R^d.
\]
Take arbitrary
\(
    \textbf{C}=(\theta_1,\ldots,\theta_k)\in(\mathbb R^d)^k,
\)
and define
\(
    \Pi\textbf{C}
    :=
    (\Pi(\theta_1),\ldots,\Pi(\theta_k))\in(K_R^+)^k.
\)
For \(x\in K_R\), we then have
\(
    \dist(x,\Pi\textbf{C})
    \le
    \dist(x,\textbf{C}).
\)
For \(x\notin K_R\), note that
\(
    g_{\Pi\textbf{C}}(x)=0.
\)
Therefore, for every \(x\in\mathbb R^d\),
\[
    \dist(x,\textbf{C})^z
    \ge
    g_{\Pi\textbf{C}}(x).
\]
Consequently,
\[
    \frac1n\sum_{i=1}^n \dist(X_i,\textbf{C})^z
    \ge
    \frac1n\sum_{i=1}^n g_{\Pi\textbf{C}}(X_i).
\]
Since \(\Pi\textbf{C}\in(K_R^+)^k\),  we deduce by equation~\eqref{eq:gC_positive_compact}, that on the event $\mathcal{E}$:
\[
    \frac1n\sum_{i=1}^n \dist(X_i,\textbf{C})^z
    \ge
    \frac{q}{2}.
\]
Since \(\textbf{C}\in(\mathbb R^d)^k\) is arbitrary, we deduce that on the event $\mathcal{E}$:
\[
    \inf_{1\le |\textbf{C}|\le k}
    \frac1n\sum_{i=1}^n \dist(X_i,\textbf{C})^z
    \ge
    \frac q2.
\]

Thus, for every \(n\),
\[
    \mathbb P\left(
        \inf_{1\le |\textbf{C}|\le k}
        \frac1n\sum_{i=1}^n \dist(X_i,\textbf{C})^z
        \ge
        \frac q2
    \right)
    \ge
    1-
    \left(1+\frac{4R}{\delta}\right)^{kd}
    \exp\left(
        -\frac{nq^2}{32(3R)^{2z}}
    \right).
\]

Define
\[
    n_0
    :=
    \left\lceil
        \frac{64(3R)^{2z}}{q^2}
        kd\log\left(1+\frac{4R}{\delta}\right)
    \right\rceil.
\]
If \(n\ge n_0\), then
\[
    kd\log\left(1+\frac{4R}{\delta}\right)
    \le
    \frac{nq^2}{64(3R)^{2z}}.
\]
Hence
\[
    \left(1+\frac{4R}{\delta}\right)^{kd}
    \exp\left(
        -\frac{nq^2}{32(3R)^{2z}}
    \right)
    \le
    \exp\left(
        -\frac{nq^2}{64(3R)^{2z}}
    \right).
\]
Therefore the theorem holds with
\[
    c := \frac{q^2}{64(3R)^{2z}}.
\]
Since \(R\) is an explicit function of \(z,q,\sigma\), the constant $n_0$ is an explicit functions of \(k,d,z,q,\sigma\). Similarly $c$ is an explicit function of $z,q,\sigma$.
\end{proof}

\section{On Polish spaces.}\label{app:polish}
A Polish space is a topological space whose topology can be induced by a
complete separable metric. This class is broad enough for many standard
spaces used in analysis, probability, statistics, and data science. In
particular, every compact metrizable space is Polish
\cite[Corollary~3.3.2]{garling2017analysis}. Therefore, whenever the data
domain is modeled as a compact metric space, the assumption that the
underlying space is Polish adds no extra restriction.

This includes many common finite-dimensional data domains. For example,
the cube $[0,1]^d$ is compact and metric, hence Polish. A fixed-resolution normalized image space can be identified with $[0,1]^3$
so it is also a compact metric space and hence Polish.

In fact, any finite label space with the discrete metric is a compact metric and hence Polish. 

Similarly, the
probability simplex
\[
    \Delta_{k-1}
    =
    \left\{
        p \in [0,1]^k : \sum_{i=1}^k p_i = 1
    \right\}
\]
is a closed subset of the compact cube $[0,1]^k$, so it is compact metric
and hence Polish. Thus, the Polish-space framework is general enough to
cover many compact metric spaces that arise naturally in data science
applications.

\section{The Average Squared Distance Objective}
\label{app:avg-squared-dist}

Consider an alternative but closely related objective to the standard $k$-means
clustering.
Instead of the minimum distance to the nearest center, define the
\emph{average squared distance} to a $k$-center set
$C=\{c_1,\dots,c_k\}\subset\mathbb{R}^d$ as
\begin{equation}
f_C(x) = \frac{1}{k}\sum_{c\in C}\|x-c\|^2.
\label{eq:avg-squared-dist}
\end{equation}

\begin{proposition}
\label{prop:avg-sq-dim}
Let $\mathcal{X}=\mathbb{R}^d$ and let
$\mathcal{F}=\bigl\{f_C:\,C\subset\mathbb{R}^d,\,|C|=k\bigr\}$
be the class of functions defined in \eqref{eq:avg-squared-dist}.
The linear span of $\mathcal{F}$ has finite dimension $D=d+2$, independently
of $k$, thereby satisfying Assumption~\textnormal{(A.1)} of
\cite{bardenet2024small}.
\end{proposition}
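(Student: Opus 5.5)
The plan is to expand each $f_C$ explicitly and read off a fixed finite spanning set. For $C=\{c_1,\dots,c_k\}$,
\[
f_C(x)=\frac1k\sum_{j=1}^k\bigl(\|x\|^2-2\langle x,c_j\rangle+\|c_j\|^2\bigr)
=\|x\|^2-2\langle x,\bar c\rangle+\frac1k\sum_{j=1}^k\|c_j\|^2,
\]
where $\bar c:=\frac1k\sum_j c_j$ is the centroid of $C$. Writing $\langle x,\bar c\rangle=\sum_{i=1}^d \bar c_i x_i$, every $f_C$ is a linear combination of the $d+2$ functions $\psi_0(x)=1$, $\psi_i(x)=x_i$ for $1\le i\le d$, and $\psi_{d+1}(x)=\|x\|^2$. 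These functions do not depend on $k$ or on $C$, so $\operatorname{span}\mathcal F\subseteq \operatorname{span}\{\psi_0,\dots,\psi_{d+1}\}$ and $\dim\operatorname{span}\mathcal F\le d+2$.

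For the reverse inequality (so that $D=d+2$ exactly), I will check two things.

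\textbf{Linear independence.} The $\psi$'s are polynomials of distinct monomial types: a constant, $d$ distinct linear monomials, and a quadratic. A vanishing combination $a_0+\sum_i a_ix_i+a_{d+1}\|x\|^2\equiv 0$ on $\mathbb{R}^d$ forces all coefficients to be zero, by comparing degree-2, degree-1 and degree-0 parts.

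\textbf{Spanning.} Each $\psi$ must lie in $\operatorname{span}\mathcal F$. Take $C_0$ consisting of $k$ distinct points with centroid $0$ and mean squared norm $r$; then $f_{C_0}=\|x\|^2+r$. Scaling $C_0$ changes $r$ while keeping the centroid at $0$, and the difference of two such functions isolates the constant $1$. That in turn gives $\|x\|^2$. Translating $C_0$ by $t e_i$ gives $f=\|x\|^2-2t x_i+\text{const}$, which isolates $x_i$. Distinctness of the $k$ points is easy to arrange for any $k$. In the degenerate case $k=1$, take $C=\{0\}$ and $C=\{a\}$ with varying $a$; this gives the same conclusions directly.

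Finally, I will note that finite-dimensionality of the linear span is precisely Assumption (A.1) of \cite{bardenet2024small}, with $D=d+2$. No step is a real obstacle. The only point needing care is the exact-dimension claim, which requires producing the constant function from differences within $\mathcal F$, since it does not appear as a single $f_C$. The scaling argument above handles this.
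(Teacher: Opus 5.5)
Your proposal is correct and follows the paper's proof, which performs the same expansion $f_C(x)=\|x\|^2-2\bar c^\top x+\frac1k\sum_j\|c_j\|^2$ and reads off the spanning set $\{1,x_1,\dots,x_d,\|x\|^2\}$. Your extra step, showing that each of these functions actually lies in $\operatorname{span}\mathcal F$ (by scaling and translating a centered configuration, with $k=1$ handled separately), is a genuine improvement. The paper infers $\dim=d+2$ from the linear independence of the spanning functions alone, and that by itself only yields $\dim\le d+2$.
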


\begin{proof}
Fix any center set $C=\{c_1,\dots,c_k\}$.
Expanding and averaging the squared distances gives
\[
f_C(x)
= \|x\|^2 - 2\bar{c}^\top x + R^2,
\]
where $\bar{c}=\frac{1}{k}\sum_{i=1}^k c_i$ is the centroid and
$R^2=\frac{1}{k}\sum_{i=1}^k\|c_i\|^2$ is the mean squared norm of the
centers.
Hence every $f_C\in\mathcal{F}$ is an affine combination of the $d+2$
linearly independent functions
$h_0(x)=\|x\|^2$,\; $h_j(x)=x_j$ for $j=1,\dots,d$,\; and $h_{d+1}(x)=1$.
Therefore $\dim(\operatorname{span}_{\mathbb{R}}(\mathcal{F}))=d+2<\infty$.
\end{proof}

Because $\mathcal{F}$ lies in a fixed $(d+2)$-dimensional function space, the
first part of Theorem~6 from \cite{bardenet2024small} applies.
With high probability over the dataset $P\sim\mu^n$, the DPP-sampled weighted
coreset $\mathcal{S}$ of expected size $m$ satisfies
\[
\mathbb{P}_{\mathcal{S}}\!\left(
  \exists\,f\in\mathcal{F}:\,
  \left|\frac{L_{\mathcal{S}}(f)}{L(f)}-1\right|\ge\varepsilon
\right)
\le 2\exp\!\bigl(6(d+2)-C'\varepsilon^2 m^{1+1/d}\bigr)
\]
for $\varepsilon=O(m^{-1/d})$, where $C'>0$ is a universal constant.
Remark~6.1 of \cite{bardenet2024small} then gives, with probability tending
to $1$,
\[
\forall\,f\in\mathcal{F},\qquad
\left|\frac{L_{\mathcal{S}}(f)}{L(f)}-1\right|
\le m^{-(1/2+1/(2d))+o(1)}.
\]
Setting the right-hand side equal to $\varepsilon$ gives the coreset size
\[
m = O\!\left(\varepsilon^{-\frac{2d}{d+1}}\right),
\]
hiding constants and lower-order logarithmic factors.
Crucially, this bound is \emph{independent of $k$}.

\section{Supplementary Benchmark Plots}
\label{app:supp-benchmark}

This appendix collects the full benchmark diagnostics for all nine datasets evaluated in Section~\ref{sec:experiments}: the six synthetic datasets (Uniform 2D, Trimodal 2D, Gaussian Blobs 2D, Gaussian Blobs 3D, Anisotropic Gaussians 2D, BIRCH Benchmark) and the three real-world datasets (3D Road Network, UCI Adult PCA-2, Twitter GPS). For each dataset, we show three complementary metrics alongside the average relative error reported in the main text. Together, these figures allow the reader to distinguish variance effects from slope effects, and to compare average-case against worst-case behavior across methods and coreset sizes.

\paragraph{Metric definitions.}
Throughout, we fix $N_{\mathrm{REP}}=150$ independent coreset draws $\mathcal{S}^{(1)},\dots,\mathcal{S}^{(N_{\mathrm{REP}})}$ and a fixed family of $N_{\mathrm{QUERY}}=150$ random query center sets $C_1,\dots,C_{N_{\mathrm{QUERY}}}$ drawn once per dataset-$k$ pair.
\begin{itemize}
  \item \textbf{Supremum variance} (``Sup.\ variance''). For each query $C_j$, let
    $\widehat\sigma^2_j(m) := \frac{1}{N_{\mathrm{REP}}-1}\sum_{r=1}^{N_{\mathrm{REP}}} \bigl(L_{\mathcal{S}^{(r)}}(C_j) - \overline{L}(C_j)\bigr)^2$
    be the empirical variance of the coreset estimator over repetitions, where $\overline{L}(C_j)$ is its sample mean. The supremum variance is then
    \[
      \sup_{j\in[N_{\mathrm{QUERY}}]} \widehat\sigma^2_j(m).
    \]
    This measures the worst-case spread of the estimator across the query family, and is the empirical analog of $V = \sup_\theta \mathrm{Var}[n^{-1}L_{\mathcal{S}}(f_\theta)]$ from Theorem~\ref{thm:kz-clustering-dpp}.
  \item \textbf{Average relative error} (``Avg.\ rel.\ error''). This is $\overline{\mathrm{RelErr}}(m)$ as defined in Section~\ref{sec:experiments}, reproduced here for ease of comparison.
  \item \textbf{$0.9$-quantile supremum relative error} ($Q_{\mathcal{S}}(0.9)$). For each repetition $r$, let
    $E^{(r)}(m) := \sup_{j\in[N_{\mathrm{QUERY}}]} \frac{|L_{\mathcal{S}^{(r)}}(C_j) - L(C_j)|}{L(C_j)}$
    be the worst-case (supremum) relative error over all queries. The metric $Q_{\mathcal{S}}(0.9)$ is the $0.9$-quantile of $E^{(1)}(m),\dots,E^{(N_{\mathrm{REP}})}(m)$:
    \[
      Q_{\mathcal{S}}(0.9) := \mathrm{quantile}_{0.9}\!\bigl(E^{(1)}(m),\dots,E^{(N_{\mathrm{REP}})}(m)\bigr).
    \]
    This robustly captures worst-query performance: it reports the supremum error that is exceeded in only $10\%$ of independent coreset draws, and is thus more stable than the hard maximum over repetitions.
\end{itemize}

\newcommand{\suppbenchfig}[5]{%
\begin{figure}[p]
  \centering
  \begin{subfigure}[b]{0.32\textwidth}
    \includegraphics[width=\textwidth]{plots/#3}
    \caption*{Sup.\ variance}
  \end{subfigure}\hfill
  \begin{subfigure}[b]{0.32\textwidth}
    \includegraphics[width=\textwidth]{plots/#4}
    \caption*{Avg.\ rel.\ error}
  \end{subfigure}\hfill
  \begin{subfigure}[b]{0.32\textwidth}
    \includegraphics[width=\textwidth]{plots/#5}
    \caption*{$Q_{\mathcal S}(0.9)$}
  \end{subfigure}
  \caption{Supplementary metrics for \textbf{#1} ($d=#2$, $k=5$).}
\end{figure}}

\suppbenchfig{Uniform 2D}{2}{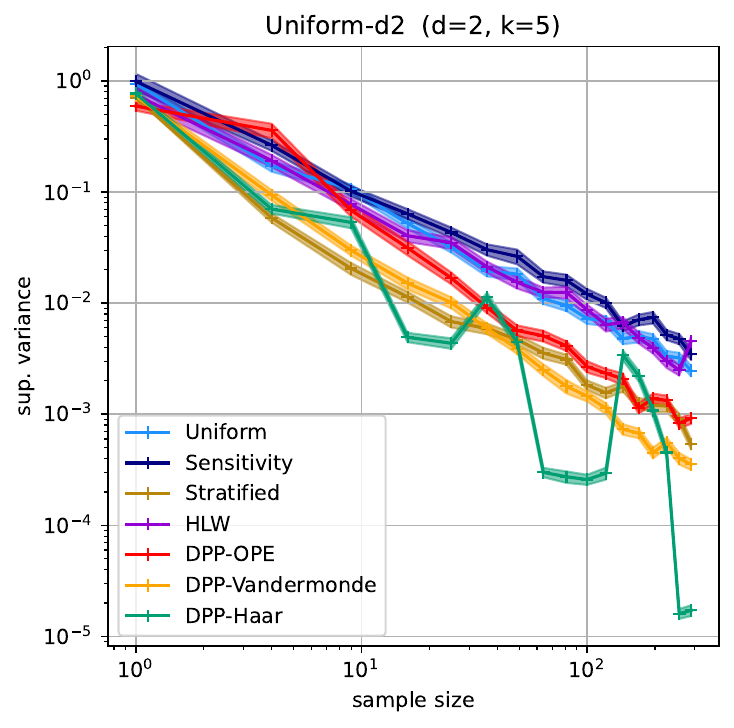}{Uniform-d2_k5_avg.pdf}{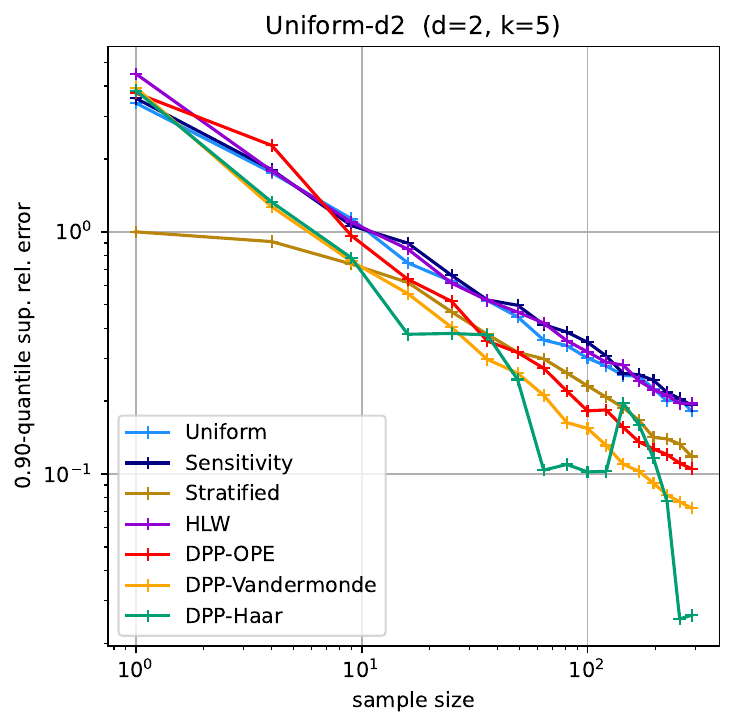}
\suppbenchfig{Trimodal 2D}{2}{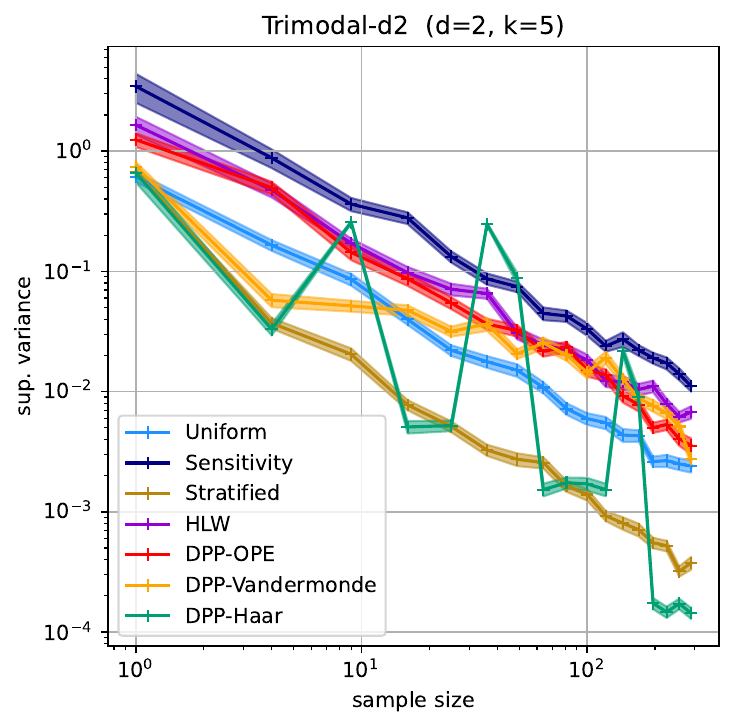}{Trimodal-d2_k5_avg.pdf}{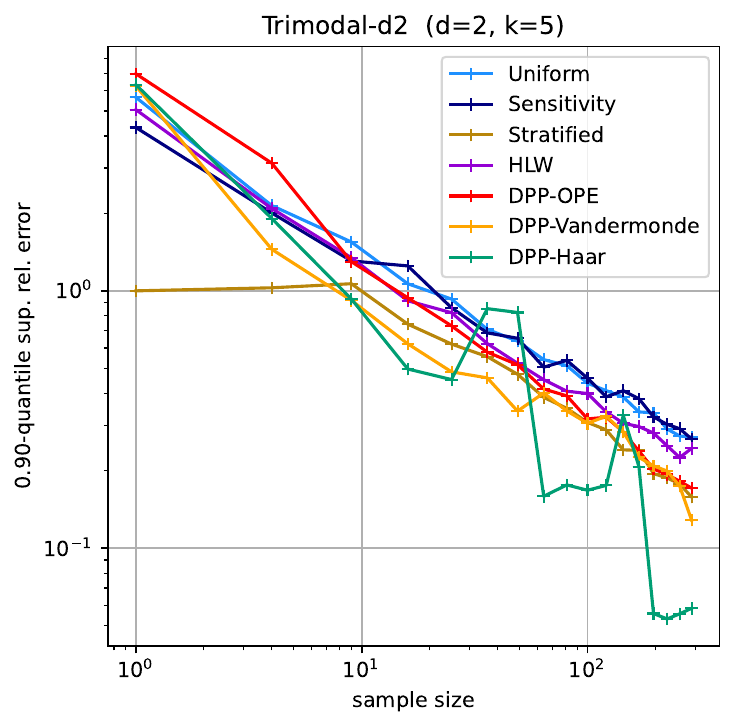}
\suppbenchfig{Gaussian Blobs 2D}{2}{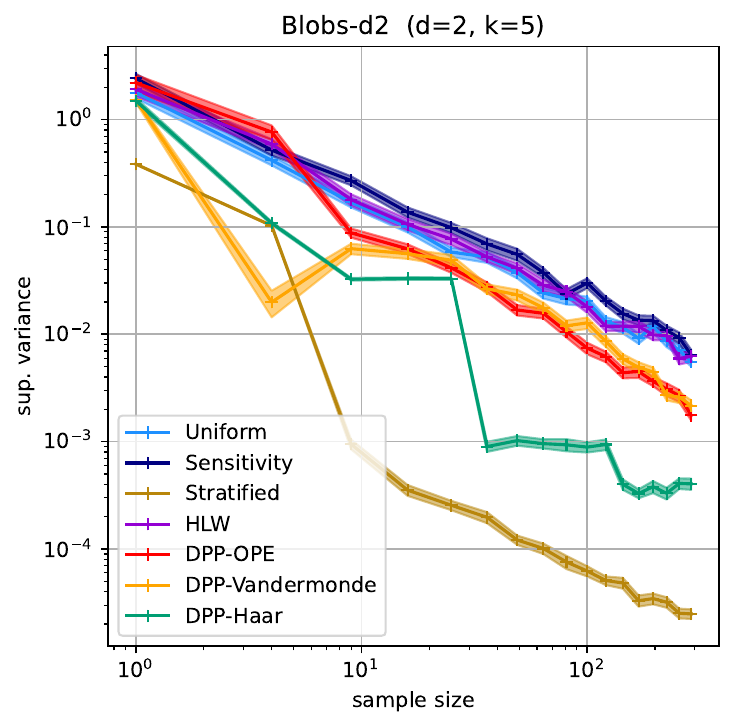}{Blobs-d2_k5_avg.pdf}{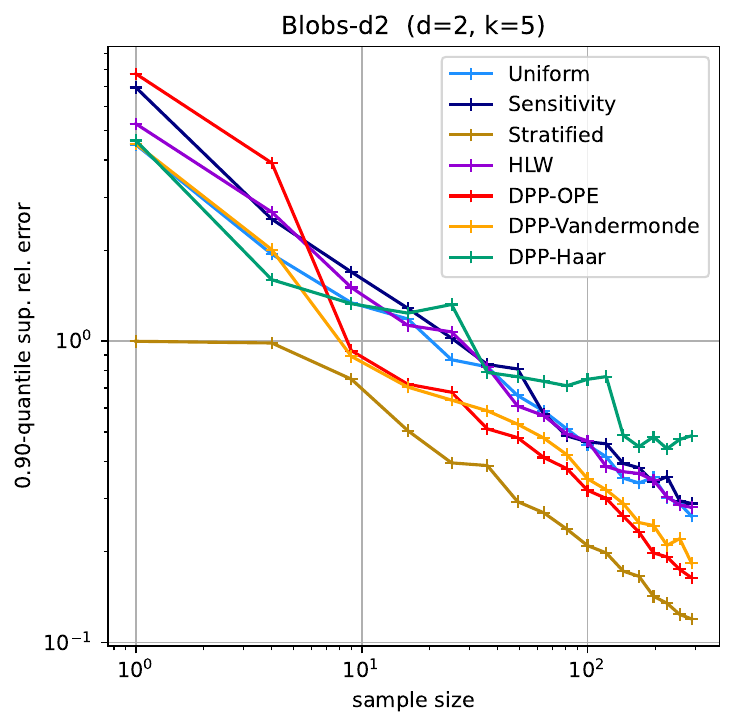}
\suppbenchfig{Gaussian Blobs 3D}{3}{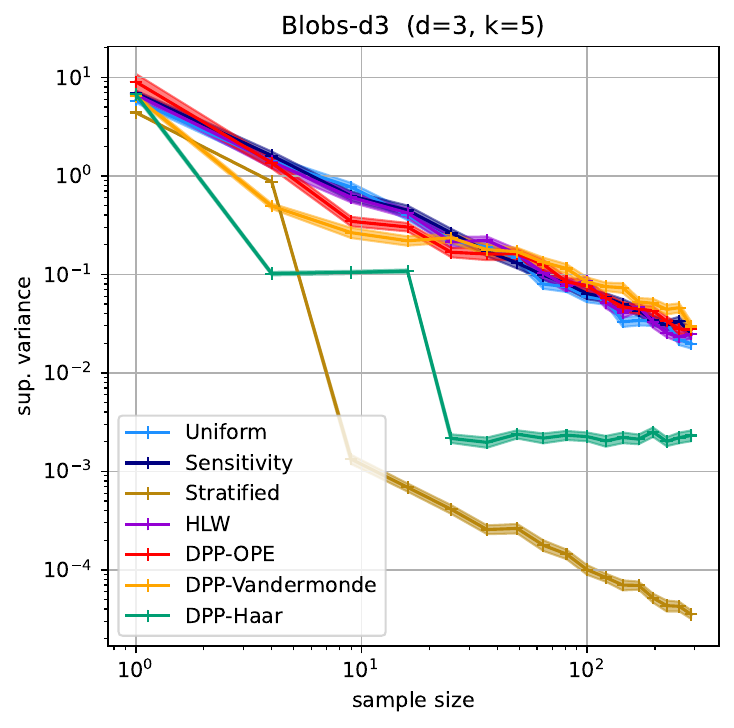}{Blobs-d3_k5_avg.pdf}{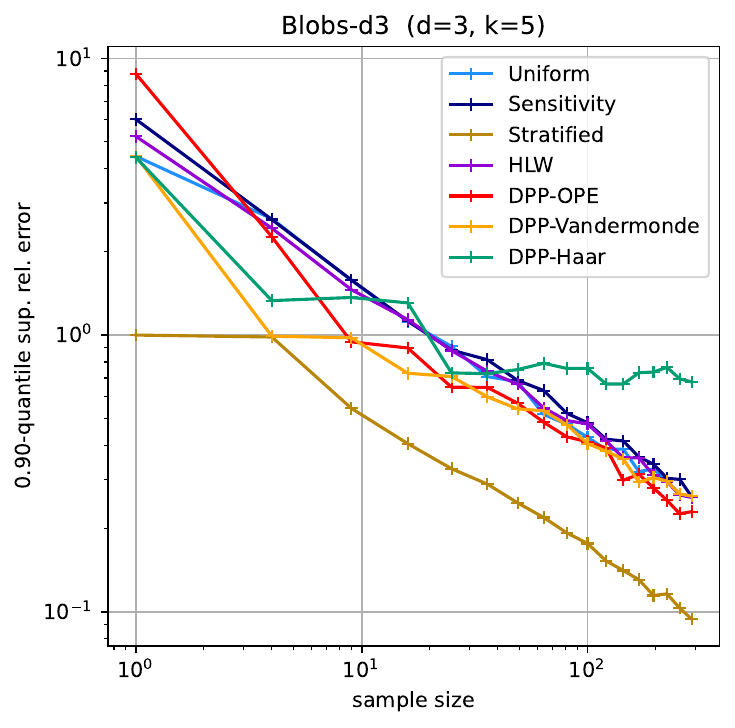}
\suppbenchfig{Anisotropic Gaussians 2D}{2}{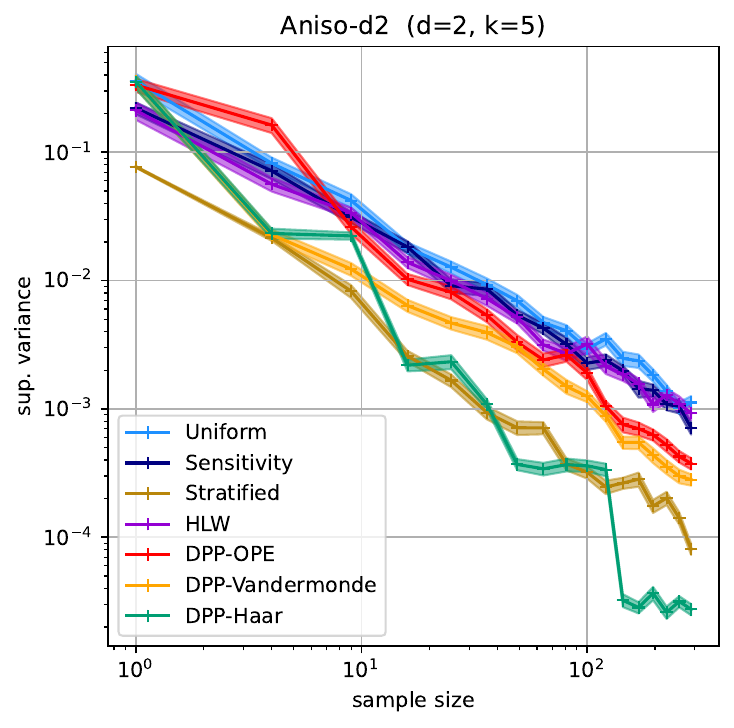}{Aniso-d2_k5_avg.pdf}{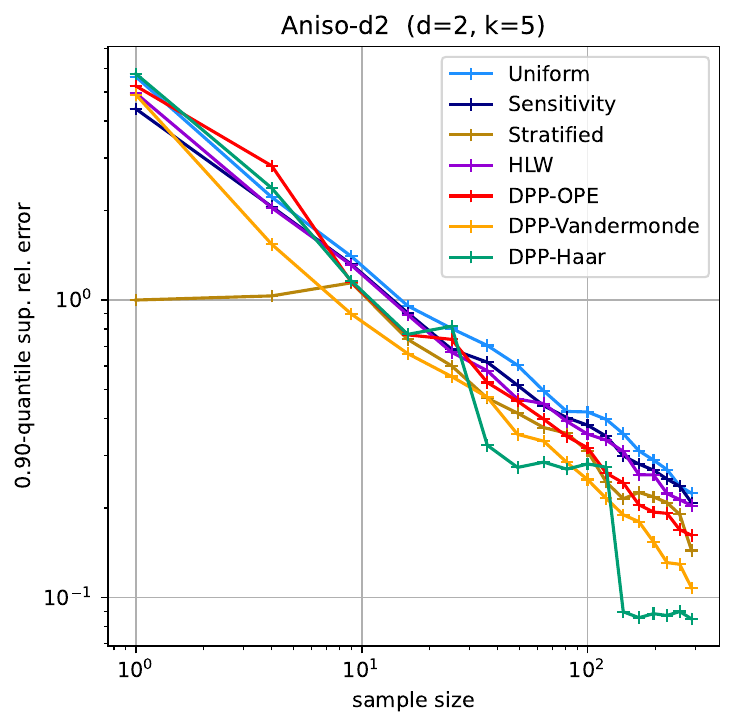}
\suppbenchfig{BIRCH Benchmark}{2}{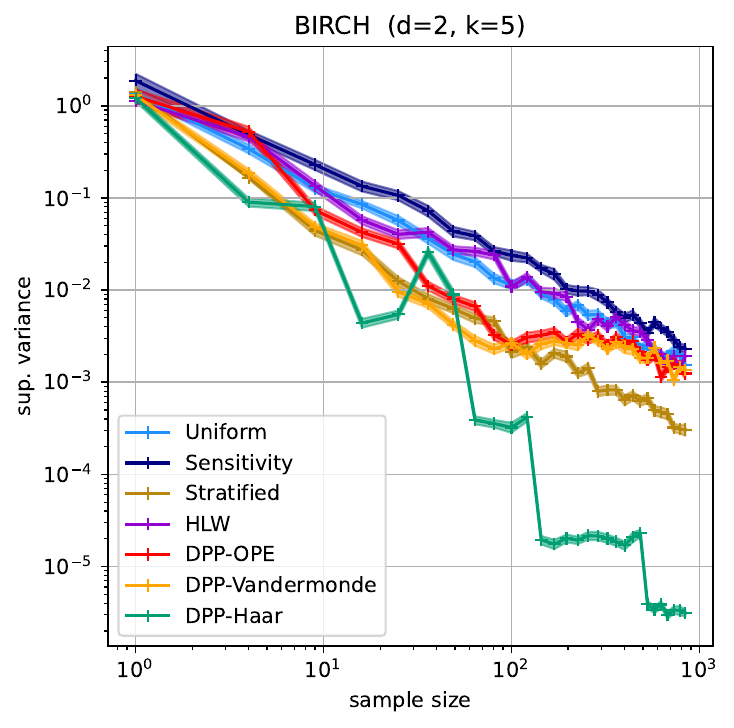}{BIRCH_k5_avg.pdf}{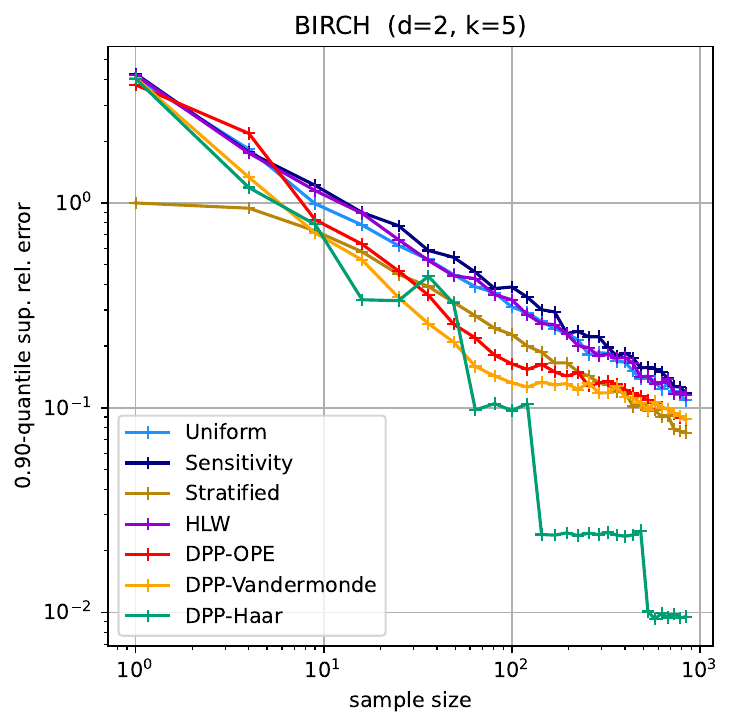}

\suppbenchfig{3D Road Network}{3}{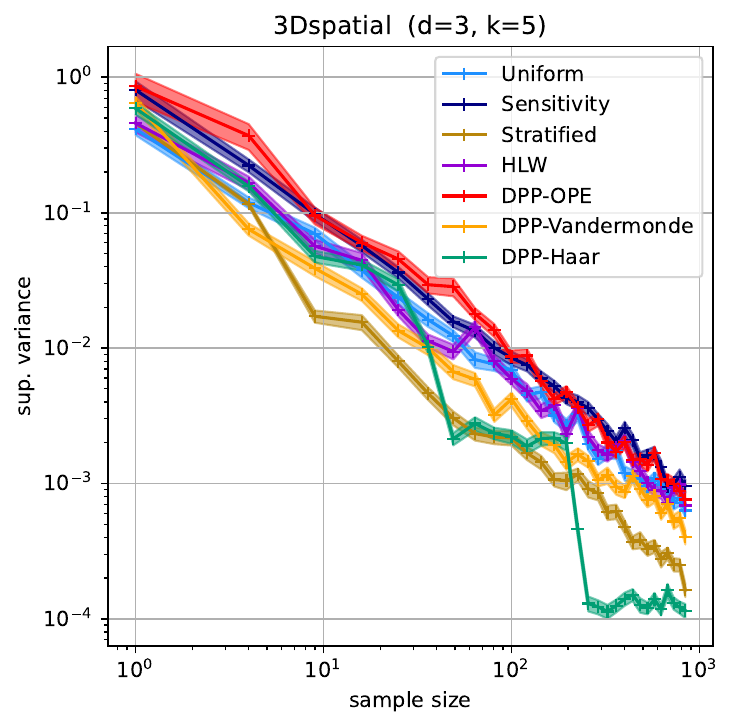}{3Dspatial_k5_avg.pdf}{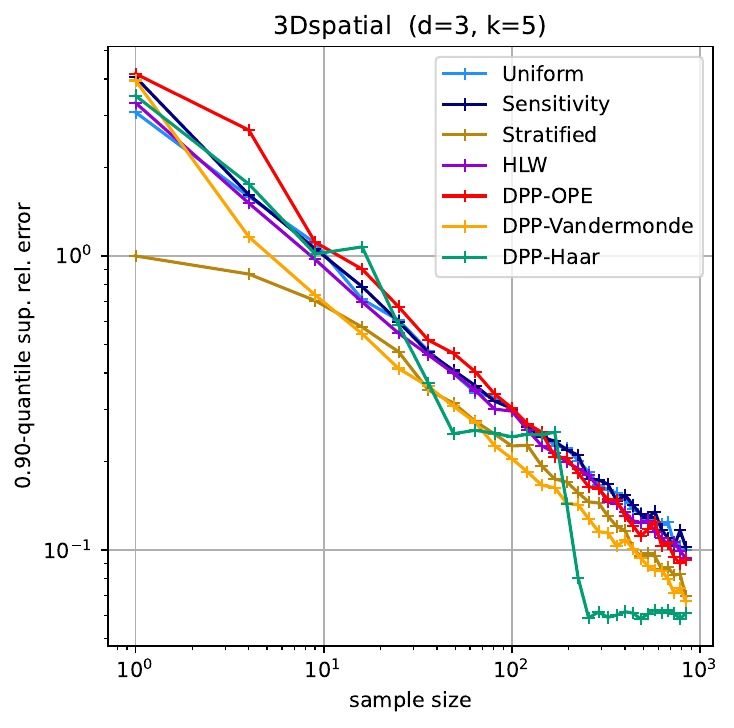}
\suppbenchfig{UCI Adult (PCA-2)}{2}{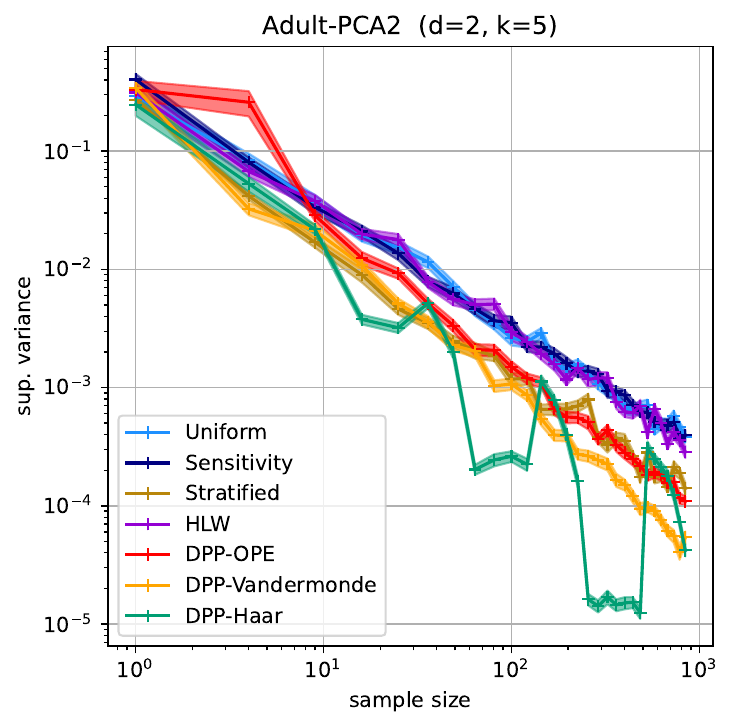}{Adult-PCA2_k5_avg.pdf}{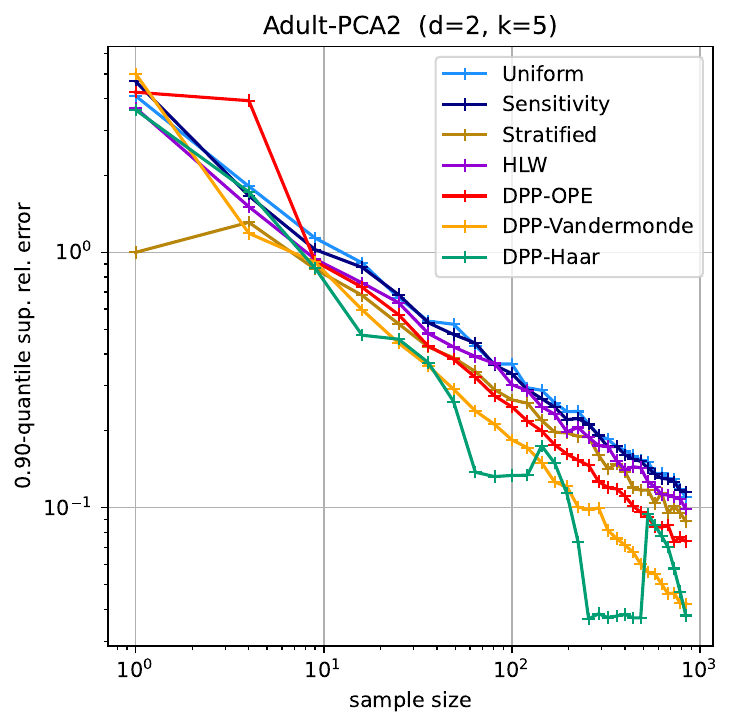}
\suppbenchfig{Twitter GPS}{2}{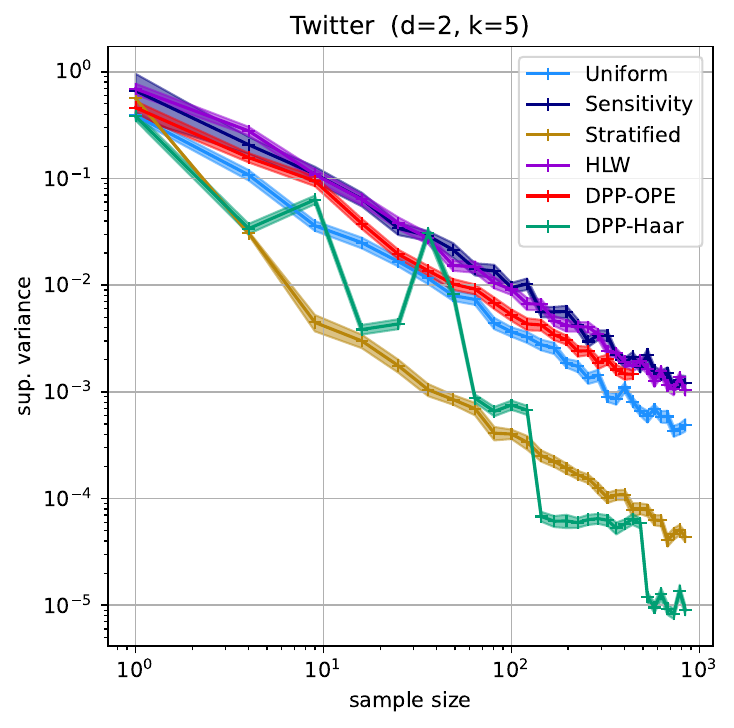}{Twitter_k5_avg.pdf}{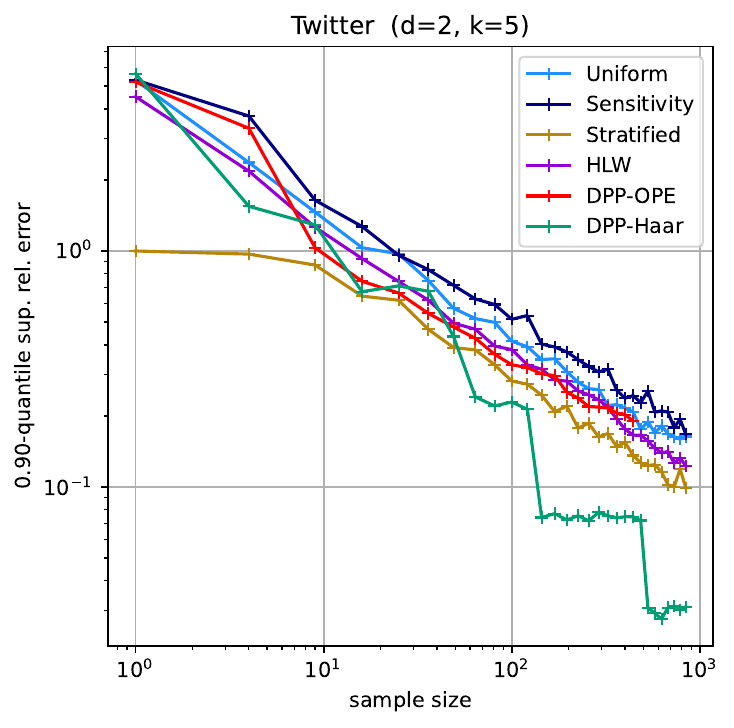}
\newpage
\section{License}\label{app:license}

\begin{itemize}
        \item \textbf{Twitter GPS dataset}~\cite{chan2018twitter}. Collected and publicly released by Chan, Guerqin, and Sozio alongside their WWW~2018 paper ``Fully Dynamic $k$-Center Clustering'' (ACM DOI: \href{https://doi.org/10.1145/3178876.3186124}{10.1145/3178876.3186124}; HAL preprint: \href{https://hal.science/hal-02315471}{hal-02315471}), available at \url{https://github.com/fe6Bc5R4JvLkFkSeExHM/k-center}. The dataset contains only GPS coordinates (longitude, latitude) and timestamps; no tweet text is included, in accordance with Twitter's privacy policy. The paper and accompanying dataset are published under the \textbf{Creative Commons Attribution 4.0 International (CC BY 4.0)} license, as stated on the paper's title page.

        \item \textbf{3D Road Network dataset}~\cite{kaul2013roadnetwork}. Obtained from the UCI Machine Learning Repository under \textbf{CC BY 4.0}.

        \item \textbf{UCI Adult dataset}~\cite{becker1996adult}. Obtained from the UCI Machine Learning Repository (DOI: \href{https://doi.org/10.24432/C5XW20}{10.24432/C5XW20}) under \textbf{CC BY 4.0}.

        \item \textbf{BIRCH Benchmark}~\cite{zhang1997birch}. Synthetic benchmark generated by our own code following the procedure described in~\cite{zhang1997birch}; no third-party data are used.

        \item \textbf{DPPy library}~\cite{gautier2019dppy}. Used for \texttt{MultivariateJacobiOPE}; released under the \textbf{MIT License} (\url{https://github.com/guilgautier/DPPy}).

        \item \textbf{scikit-learn}~\cite{pedregosa2011scikit}. Used for PCA and baseline clustering utilities; released under the \textbf{BSD 3-Clause License} (\url{https://github.com/scikit-learn/scikit-learn}).

        \item All remaining synthetic datasets (Uniform, Trimodal, Gaussian Blobs, Anisotropic Gaussians) are generated entirely by our own code and involve no third-party data or assets.
    \end{itemize}

\end{document}